\newtheorem{thm}{Theorem}
\newtheorem{assumption}{Assumption}
\begin{document}

\title{A Bayesian Approach for Online Classifier Ensemble}

\author{\name Qinxun Bai \email qinxun@cs.bu.edu \\
       \addr Department of Computer Science\\
       Boston University\\
       Boston, MA 02215, USA
       \AND
       \name Henry Lam \email khlam@umich.edu \\
       \addr Department of Industrial \& Operations Engineering\\
       University of Michigan\\
       Ann Arbor, MI 48109, USA
       \AND
       \name Stan Sclaroff \email sclaroff@cs.bu.edu \\
       \addr Department of Computer Science\\
       Boston University\\
       Boston, MA 02215, USA}

\editor{}

\maketitle

\begin{abstract}
We propose a Bayesian approach for recursively estimating the classifier weights in online learning of a classifier ensemble. In contrast with past methods, such as stochastic gradient descent or online boosting, our approach estimates the weights by recursively updating its posterior distribution. For a specified class of loss functions, we show that it is possible to formulate a suitably defined likelihood function and hence use the posterior distribution as an approximation to the global empirical loss minimizer. If the stream of training data is sampled from a stationary process, we can also show that our approach admits a superior rate of convergence to the expected loss minimizer than is possible with standard stochastic gradient descent. In experiments with real-world datasets, our formulation often performs better than state-of-the-art stochastic gradient descent and online boosting algorithms.
\end{abstract}

\begin{keywords}
  Online learning, classifier ensembles, Bayesian methods.
\end{keywords}

\section{Introduction}

The basic idea of classifier ensembles is to enhance the performance of individual classifiers by combining them.
In the offline setting, a popular approach to obtain the ensemble weights is to minimize the training error, or a surrogate risk function that approximates the training error. Solving this optimization problem usually calls for various sorts of gradient descent methods. For example, the most successful and popular ensemble technique, boosting, can be viewed in such a way~\citep{FreundSchapire1995,Mason1999,Friedman2001,Telgarsky2012}. Given the success of these ensemble techniques in a variety of batch learning tasks, it is natural to consider extending this idea to the online setting, where the labeled sample pairs $\{\mathbf{x}_t,y_t\}_{t=1}^T$ are presented to and processed by the algorithm sequentially, one at a time.

Indeed, online versions of ensemble methods have been proposed from a spectrum of perspectives. Some of these works focus on close approximation of offline ensemble schemes, such as boosting~\citep{OzaRussell2001,Pelossof2009}. Other methods are based on stochastic gradient descent~\citep{Babenko2009b,Leistner2009,Grbovic2011}. Recently,~\citet{Chen2012} formulated a smoothed boosting algorithm based on the analysis of regret from offline benchmarks. Despite their success in many applications~\citep{Grabner2006, Babenko2009a}, however, there are some common drawbacks of these online ensemble methods, including the lack of a universal framework for theoretical analysis and comparison, and the \emph{ad hoc} tuning of learning parameters such as step size.


In this work, we 
propose 
an online ensemble classification method that is not based on boosting or gradient descent. The main idea is to recursively estimate a posterior distribution of the ensemble weights in a Bayesian manner. We show that, for a given class of loss functions, we can define a likelihood function on the ensemble weights and, with an appropriately formulated prior distribution, we can
generate a posterior mean that closely approximates the empirical loss minimizer. If the stream of training data is sampled from a stationary process, this posterior mean converges to the expected loss minimizer.


Let us briefly explain the rationale for this scheme, which shall be contrasted from the usual Bayesian setup where the likelihood is chosen to describe closely the generating process of the training data. In our framework, we view Bayesian updating as a loss minimization procedure: it provides an approximation to the minimizer of a well-defined risk function. More precisely, this risk minimization interpretation comes from the exploitation of two results in statistical asymptotic theory. First is that, under mild regularity conditions, a Bayesian posterior distribution tends to peak at the maximum likelihood estimate (MLE) of the same likelihood function, as a consequence of the so-called Laplace method~\citep{mackay2003information}. Second, MLE can be viewed as a risk minimizer, where the risk is defined precisely as the expected negative log-likelihood. 
Therefore, given a user-defined loss function, one can choose a suitable log-likelihood as a pure artifact, and apply a corresponding Bayesian update to minimize the risk. We will develop the theoretical foundation that justifies the above rationale.

Our proposed online ensemble classifier learning scheme is straightforward, but powerful in two respects. First, whenever our scheme is applicable, it can approximate the global optimal solution, in contrast with local methods such as stochastic gradient descent (SGD).
Second, assuming the training data is sampled from a stationary process, our proposed scheme possesses a rate of convergence to the expected loss minimizer that is at least as fast as standard SGD. In fact, our rate is faster unless the SGD step size is chosen optimally, which cannot be done \emph{a priori} in the online setting. 
Furthermore, we also found that our method performs better in experiments with finite datasets compared with the averaging schemes in SGD~\citep{PolyakJuditsky1992,Schmidt2013} that have the same optimal theoretical convergence rate as our method. 

In addition to providing a theoretical analysis of our formulation, we also tested our approach on real-world datasets and compared with individual classifiers, a baseline stochastic gradient descent method for learning classifier ensembles, and their averaging variants, as well as state-of-the-art online boosting methods. We found that our scheme consistently achieves superior performance over the baselines and often performs better than state-of-the-art online boosting algorithms, further demonstrating the validity of our theoretical analysis.

\noindent In summary, our contributions are:
\begin{enumerate}
\vspace{-0.1cm}
\item
We propose a Bayesian approach to estimate the classifier weights with closed-form updates for online learning of classifier ensembles.
\vspace{-0.1cm}
\item
We provide theoretical analyses of both the convergence guarantee and the bound on prediction error.
\vspace{-0.1cm}
\item
We compare the asymptotic convergence rate of the proposed framework versus previous gradient descent frameworks thereby demonstrating the advantage of the proposed framework.
\end{enumerate}

This paper is organized as follows. We first briefly discuss the related works. We then state in detail our approach and provide theoretical guarantees in Section~\ref{sec:scheme}. A specific example for solving the online ensemble problem is provided in Section~\ref{sec:examples}, and numerical experiments are reported in Section~\ref{sec:experiments}. We discuss the use of other loss functions for online ensemble learning in Section~\ref{sec:ext} and conclude our paper in Section~\ref{sec:future} with future work. Some technical proofs are left to the Appendix.

\section{Related work}
\label{sec:related}

There is considerable past work on online ensemble learning. Many past works have focused on online learning with concept drift~\citep{WangFanYuHan2003,KolterMaloof2005,KolterMaloof2007,Minku2011}, where dynamic strategies of pruning and rebuilding ensemble members are usually considered. Given the technical difficulty, theoretical analysis for concept drift seems to be underdeveloped. \citet{KolterMaloof2005} proved error bounds for their proposed method, which appears to be the first such theoretical analysis, yet such analysis is not easily generalized to other methods in this category. Other works, such as~\citet{Schapire2001}, and~\citet{Cesa2003}, obtained performance bounds from the perspective of iterative games.

Our work is more closely related to methods that operate in a stationary environment, most notably some online boosting methods.  One of the first methods
was proposed by~\citet{OzaRussell2001}, who showed asymptotic convergence to batch boosting under certain conditions. However, the convergence result only holds for some simple ``lossless" weak classifiers~\citep{Oza2001}, such as Na\"{\i}ve Bayes. Other variants of online boosting have been proposed, such as methods that employ feature selection~\citep{Grabner2006,Liu2007}, semi-supervised learning~\citep{Grabner2008}, multiple instance learning~\citep{Babenko2009a}, and multi-class learning~\citep{Saffari2010}. However, most of these works consider the design and update of weak classifiers beyond that of~\citet{Oza2001} and, thus, do not bear the convergence guarantee therein.
Other methods employ the gradient descent framework, such as Online GradientBoost~\citep{Leistner2009}, Online Stochastic Boosting~\citep{Babenko2009b} and Incremental Boosting~\citep{Grbovic2011}.  There are convergence results given for many of these, which provide a basis for comparison with our framework. In fact, we show that our method compares favorably to gradient descent in terms of asymptotic convergence rate.
%

Other recent online boosting methods (Chen et al., 2012; Beygelzimer et al., 2015) generalize the weak learning assumption to online learning, and can offer theoretical guarantees on the error rate of the learned strong classifier if certain performance assumptions are satisfied for the weak learners.  Our work differs from these approaches, in that our formulation and theoretical analysis focuses on the classes of loss functions, rather than imposing assumptions on the set of weak learners. In particular, we show that the ensemble weights in our algorithm converge asymptotically at an optimal rate to the minimizer of the expected loss.

Our proposed optimization scheme is related to two other lines of work. First is the so-called model-based method for global optimization~\citep{Zlochin2004,RubinsteinKroese2004,Hu2007}. This method iteratively generates an approximately optimal solution as the summary statistic for an evolving probability distribution. It is primarily designed for deterministic optimization, in contrast to the stochastic optimization setting that we consider. Second, our approach is, at least superficially, related to Bayesian model averaging (BMA)~\citep{Hoeting1999}. While BMA is motivated from a model selection viewpoint and aims to combine several candidate models for better description of the data, our approach does not impose any model but instead targets at loss minimization. 

The present work builds on an earlier conference paper~\citep{Bai2014}. We make several generalizations here. First, we remove a restrictive, non-standard requirement on the loss function (which enforces the loss function to satisfy certain integral equality; Assumption 2 in~\citealp{Bai2014}). Second, we conduct experiments that compare our formulation with two variants of the SGD baseline in~\citet{Bai2014}, where the ensemble weights are estimated via two averaging schemes of SGD, 
namely Polyak-Juditsky averaging~\citep{PolyakJuditsky1992} and
Stochastic Averaging Gradient~\citep{Schmidt2013}. Third, we evaluate two additional loss functions for ensemble learning and compare them with the loss function proposed in~\citet{Bai2014}.

\section{Bayesian Recursive Ensemble}
\label{sec:scheme}
We denote the input feature
by $\mathbf{x}$ and its classification label by $y$ ($1$ or $-1$). We assume that we are given $m$ binary weak classifiers $\{c_i(\mathbf{x})\}_{i=1}^m$, and our goal is to find the best ensemble weights $\boldsymbol\lambda=(\lambda_1,\ldots,\lambda_m)$, where $\lambda_i\geq 0$, to construct an ensemble classifier. For now, we do not impose a particular form of ensemble method (we defer this until Section \ref{sec:examples}), although one example form is $\sum_i\lambda_ic_i(\mathbf{x})$.
We focus on online learning, where training data $(\mathbf{x},y)$ comes in sequentially, one at a time at $t=1,2,3, \ldots$.

\subsection{Loss Minimization Formulation}
We formulate the online ensemble learning problem as a stochastic loss minimization problem. We first introduce a loss function at the weak classifier level.
Given a training pair $(\mathbf{x},y)$ and an arbitrary weak classifier $h$, we denote $g:=g(h(\mathbf{x}),y)$ as a non-negative loss function. Popular choices of $g$ include the logistic loss function, hinge loss, ramp loss, zero-one loss, etc. If $h$ is one of the given weak classifiers $c_i$, we will denote $g(c_i(\mathbf{x}),y)$ as $g_i(\mathbf{x},y)$, or simply $g_i$ for ease of notation. Furthermore, we define $g_i^t:=g(c_i^t(\mathbf x^t),y^t)$ where $(\mathbf x^t,y^t)$ is the training sample and $c_i^t$ the updated $i$-th weak classifier at time $t$. To simplify notation, we use $\mathbf g:=(g_1,\ldots,g_m)$ to denote the vector of losses for the weak classifiers, $\mathbf g^t:=(g_1^t,\ldots,g_m^t)$ to denote the losses at time $t$, and $\mathbf g^{1:T}:=(\mathbf g^1,\ldots,\mathbf g^T)$ to denote the losses
up to time $T$.

With the above notation, we let $\ell(\bm\lambda;\mathbf g^t)$ be some ensemble loss function at time $t$, which depends on the ensemble weights and the individual loss of each weak classifier. Then, ideally, the optimal ensemble weight vector $\bm\lambda^*$ should minimize the expected loss $E[\ell(\bm\lambda,\bm g)]$, where the expectation is taken with respect to the underlying distribution of the training data $p(\mathbf{x},y)$. Since this data distribution is unknown, we use the empirical loss as a surrogate:
\begin{equation}
L_T(\boldsymbol\lambda;\mathbf g^{1:T})=\ell_0(\boldsymbol\lambda)+\sum_{t=1}^T\ell(\boldsymbol\lambda;\mathbf g^t)
\label{eq:cumulated}
\end{equation}
where $\ell_0(\bm\lambda)$ can be regarded as an initial loss and can be omitted.

We make a set of assumptions on $L_T$ that are adapted from \citet{Chen1985}: 
\begin{assumption}[Regularity Conditions]
Assume that for each $T$, there exists a $\boldsymbol\lambda_T^*$ that minimizes \eqref{eq:cumulated}, and
\vspace*{-0.5em}
\begin{enumerate}
\item ``local optimality": for each $T$, $\nabla L_T(\bm\lambda_T^*;\mathbf g^{1:T})=0$ and $\nabla^2L_T(\bm\lambda_T^*;\mathbf g^{1:T})$ is positive definite,
\item ``steepness": the minimum eigenvalue of $\nabla^2L_T(\bm\lambda_T^*;\mathbf g^{1:T})$ 
approaches
$\infty$ as $T\to\infty$,
\item ``smoothness": For any $\epsilon>0$, there exists a positive integer $N$ and $\delta>0$ such that for any $T>N$ and $\bm\theta\in H_\delta(\bm\lambda_T^*)=\{\bm\theta:\|\bm\theta-\bm\lambda_T^*\|_2\leq\delta\}$, $\nabla^2L_T(\bm\theta;\mathbf g^{1:T})$ exists and satisfies
{\footnotesize
    $$I-A(\epsilon)\leq\nabla^2L_T(\bm\theta;\mathbf g^{1:T})\left(\nabla^2L_T(\bm\lambda_T^*;\mathbf g^{1:T})\right)^{-1}\leq I+A(\epsilon)$$}
for some positive semidefinite symmetric matrix $A(\epsilon)$ whose largest eigenvalue tends to 0 as $\epsilon\to0$, and the inequalities above are matrix inequalities,
\item ``concentration": for any $\delta>0$, there exists a positive integer $N$ and constants $c,p>0$ such that for any $T>N$ and $\theta\not\in H_\delta(\bm\lambda_T^*)$, we have
    \begin{eqnarray*}
    &&L_T(\bm\theta;\mathbf g^{1:T})-L_T(\bm\lambda_T^*;\mathbf g^{1:T})< \\
    &&c\left((\bm\theta-\bm\lambda_T^*)'\nabla^2L_T(\bm\lambda_T^*;\mathbf g^{1:T})(\bm\theta-\bm\lambda_T^*)\right)^p,
    \end{eqnarray*}
\item ``integrability":
$$\int e^{-L_T(\bm\lambda;\mathbf g^{1:T})}d\bm\lambda<\infty.$$
\end{enumerate} \label{regularity}
\end{assumption}

In the situation where $\ell$ is separable in terms of each component of $\bm\lambda$, i.e.~$\ell(\bm\lambda;\mathbf g)=\sum_{i=1}^mr_i(\lambda_i;\mathbf g)$ and $\ell_0(\bm\lambda)=\sum_{i=1}^ms_i(\lambda_i)$ for some twice differentiable functions $r_i(\cdot;\mathbf g)$ and $s_i(\cdot)$, the assumptions above will depend only on
$f_i(\lambda;\mathbf g^{1:T}):=\sum_{t=1}^Tr_i(\lambda;\mathbf g^t)+s_i(\lambda)$ for each $i$. For example, Condition 3 in Assumption \ref{regularity} reduces to merely checking uniform continuity of each $f_i''(\cdot;\mathbf g^{1:T})$.

Condition 1 in Assumption \ref{regularity} can be interpreted as the standard first and second order conditions for the optimality of $\bm\lambda_T^*$, whereas Condition 3 in essence requires continuity of the Hessian matrix. Conditions 2 and 4 are needed for the use of the Laplace method \citep{mackay2003information}, which, as we will show later, stipulates that the posterior distribution peaks near the optimal solution $\bm\lambda_T^*$ of empirical loss~\eqref{eq:cumulated}.

\subsection{A Bayesian Approach}
\label{sec:framework}

We state our procedure in Algorithm~\ref{alg:framework}. We define $p(\mathbf g|\bm\lambda)=e^{-\ell(\bm\lambda;\mathbf g)}$ and $p(\bm\lambda)=e^{-\ell_0(\bm\lambda)}$.

\begin{algorithm}[htb]
   \caption{Bayesian Ensemble}
   \label{alg:framework}
\begin{algorithmic}
   \STATE {\bfseries Input:} streaming samples $\{(\mathbf{x}^t,y^t)\}_{t=1}^T$
   \STATE \qquad\quad online weak classifiers $\{c_i^t(\mathbf{x})\}_{i=1}^m$
   \STATE \qquad\quad the functions $p(\mathbf{g}|\bm\lambda)$ and $p(\bm\lambda)$
   \STATE {\bfseries Initialize:} hyper-parameters for $p(\mathbf{g}|\bm\lambda)$ and $p(\bm\lambda)$
   \FOR{$t=1$ {\bfseries to} $T$}
   \STATE \begin{itemize}
       \item[] $\forall i$, compute $g_i^t=g(c_i^t(\mathbf{x}^t),y^t)$
       \item[] update for the ``posterior distribution" of $\bm\lambda:$\\
       $p(\bm\lambda|\mathbf{g}^{1:t})\propto p(\mathbf{g}^{t}|\bm\lambda)p(\bm\lambda|\mathbf{g}^{1:t-1}) \propto\prod\limits_{s=1}^tp(\mathbf{g}^{s}|\bm\lambda)p(\bm\lambda)$
       \item[] update the weak classifiers using $(\mathbf{x}^t,y^t)$
       \end{itemize}
   \ENDFOR
\end{algorithmic}
\end{algorithm}
Algorithm~\ref{alg:framework} requires some further explanation:
\begin{enumerate}
\item Our updated estimate for $\bm\lambda$ at each step is the ``posterior mean" for $\bm\lambda$, given by
$$\frac{\int\bm\lambda\prod\limits_{s=1}^tp(\mathbf{g}^{s}|\bm\lambda)p(\bm\lambda)d\bm\lambda}{\int\prod\limits_{s=1}^tp(\mathbf{g}^{s}|\bm\lambda)p(\bm\lambda)d\bm\lambda}$$
\item When the loss function $\ell$ satisfies
\begin{equation}
\int e^{-\ell(\bm\lambda;\mathbf w)}d\mathbf w=1\label{int t}
\end{equation}
and $\ell_0$ satisfies
\begin{equation*}
\int e^{-\ell_0(\mathbf w)}d\mathbf w=1
\end{equation*}
then $p(\mathbf g|\bm\lambda)$ is a valid likelihood function and $p(\bm\lambda)$ a valid prior distribution, so that $p(\bm\lambda|\mathbf g^{1:t})$ as depicted in Algorithm~\ref{alg:framework} is indeed a posterior distribution for $\bm\lambda$ (i.e. the quote-and-quote around ``posterior distribution"  in the algorithm can be removed). In this context, a good choice of $p(\bm\lambda)=e^{-\ell_0(\bm\lambda)}$, e.g. as a conjugate prior for the likelihood $p(\mathbf g|\bm\lambda)=e^{-\ell(\bm\lambda;\mathbf g)}$, can greatly facilitate the computational effort at each step. On the other hand, we also mention that such a likelihood interpretation is not a necessary requirement for Algorithm~\ref{alg:framework} to work, since its convergence analysis relies on the Laplace method, which is non-probabilistic in nature.
\end{enumerate}

Algorithm~\ref{alg:framework} offers the desirable properties characterized by the following theorem. 

\begin{thm}
\label{thm:convergence}
Under Assumption \ref{regularity}, the sequence of random vectors $\bm\lambda_T$ with distributions $p_{\scriptscriptstyle T}(\bm\lambda|\mathbf g^{1:T})$ in Algorithm~\ref{alg:framework} satisfies the asymptotic normality property
\begin{equation}
\left(\nabla^2L_T(\bm\lambda_T^*;\mathbf g^{1:T})\right)^{1/2}(\bm\lambda_T-\bm\lambda_T^*)\stackrel{d}{\to}N(0,1) \label{AN}
\end{equation}
where $\bm\lambda_T$ is interpreted as
 a random variable with distribution $p_{\scriptscriptstyle T}(\bm\lambda|\mathbf g^{1:T})$, and $\stackrel{d}{\to}$ denotes convergence in distribution.  Furthermore, under the uniform integrability condition $\sup_{\scriptscriptstyle T}E_{\bm\lambda_T|\mathbf g^{1:T}}\|\bm\lambda_T-\bm\lambda_T^*\|_1^{1+\epsilon}<\infty$ for some $\epsilon>0$, we have
\begin{equation}
|E_{\bm\lambda_T|\mathbf g^{1:T}}[\bm\lambda_T]-\bm\lambda_T^*|=o\left(\frac{1}{\sigma_T^{1/2}}\right)\label{posterior mean}
\end{equation}
where $E_{\bm\lambda_T|\mathbf g^{1:T}}[\cdot]$ denotes the posterior mean and
$\sigma_T$ is the minimum eigenvalue of the matrix $\nabla^2L_T(\bm\lambda_T^*;\mathbf g^{1:T})$.
\end{thm}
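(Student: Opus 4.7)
The plan is to apply the Laplace method to the posterior density $p_{\scriptscriptstyle T}(\bm\lambda|\mathbf g^{1:T})\propto e^{-L_T(\bm\lambda;\mathbf g^{1:T})}$, which is well-defined by Condition~5. Set $H_T:=\nabla^2 L_T(\bm\lambda_T^*;\mathbf g^{1:T})$ and rescale via $\bm\xi=H_T^{1/2}(\bm\lambda-\bm\lambda_T^*)$, so that the density of $\bm\xi_T$ becomes
$$\tilde p_T(\bm\xi)=\frac{e^{-\Delta_T(\bm\xi)}}{\int e^{-\Delta_T(\bm\eta)}d\bm\eta},\qquad \Delta_T(\bm\xi):=L_T(\bm\lambda_T^*+H_T^{-1/2}\bm\xi;\mathbf g^{1:T})-L_T(\bm\lambda_T^*;\mathbf g^{1:T}).$$
The asymptotic normality claim \eqref{AN} reduces to showing $\tilde p_T\to\phi$ in $L^1$, where $\phi(\bm\xi)=(2\pi)^{-m/2}e^{-\bm\xi'\bm\xi/2}$ is the standard multivariate normal density, whence convergence in distribution follows from Scheff\'e's lemma.

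To establish pointwise convergence of $\tilde p_T$, use Condition~1 ($\nabla L_T(\bm\lambda_T^*;\mathbf g^{1:T})=0$) and a second-order Taylor expansion to write
$$\Delta_T(\bm\xi)=\tfrac12\bm\xi'\bigl(H_T^{-1/2}\nabla^2L_T(\bm\lambda_T^*+t\,H_T^{-1/2}\bm\xi;\mathbf g^{1:T})H_T^{-1/2}\bigr)\bm\xi$$
for some $t\in[0,1]$. For any fixed $\bm\xi$, the steepness condition~2 forces $\|H_T^{-1/2}\bm\xi\|_2\to 0$, so the intermediate point eventually lies in $H_\delta(\bm\lambda_T^*)$, and the smoothness condition~3 drives the bracketed matrix to $I$; thus $\Delta_T(\bm\xi)\to\tfrac12\bm\xi'\bm\xi$. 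To upgrade pointwise to $L^1$ convergence (and to obtain convergence of the normalizer to $(2\pi)^{m/2}$), split $\R^m$ into a ball $B_R=\{\|\bm\xi\|_2\le R\}$ and its complement: on $B_R$ the sandwich $I-A(\epsilon)\le\nabla^2L_T\,H_T^{-1}\le I+A(\epsilon)$ produces a $\bm\xi$-uniform quadratic dominating function, allowing dominated convergence; on $B_R^c$, after the change of variables, the concentration condition~4 transfers to an $e^{-c(\bm\xi'\bm\xi)^p}$ tail whose integral is made arbitrarily small by choosing $R$ large. Piecing together the two regions yields the $L^1$ limit, establishing \eqref{AN}.

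For the posterior-mean bound \eqref{posterior mean}, write $E[\bm\lambda_T]-\bm\lambda_T^*=H_T^{-1/2}\,E[\bm\xi_T]$; since $\|H_T^{-1/2}\|_2=\sigma_T^{-1/2}$, it suffices to show $\|E[\bm\xi_T]\|\to 0$. The convergence $\bm\xi_T\stackrel{d}{\to}N(0,I)$ established above, together with the supplied uniform integrability condition on $\|\bm\lambda_T-\bm\lambda_T^*\|_1^{1+\epsilon}$, upgrades distributional convergence to convergence of first moments by the standard truncation-plus-tail argument: the bulk contribution converges via \eqref{AN} to the mean of $N(0,I)$, namely $\mathbf 0$, while the $1+\epsilon$ moment assumption (used in conjunction with the tail estimate from Condition~4) bounds the tail contribution uniformly and drives it to zero. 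Hence $E[\bm\xi_T]\to\mathbf 0$, giving $|E[\bm\lambda_T]-\bm\lambda_T^*|\le\sigma_T^{-1/2}\cdot o(1)=o(\sigma_T^{-1/2})$.

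The main obstacle I anticipate is the tight, uniform-in-$\bm\xi$ control required to lift pointwise to $L^1$ convergence of $\tilde p_T$; the dominating function on $B_R$ coming from smoothness and the tail estimate on $B_R^c$ coming from concentration must be stitched together so that both the numerator at any fixed $\bm\xi$ and the denominator over all $\bm\xi$ are controlled by the same envelope. This is exactly what Conditions~2--4 are designed to enable, but the bookkeeping (the interplay between the shrinking neighborhood $H_\delta(\bm\lambda_T^*)$ in $\bm\lambda$-coordinates and the growing ball $B_R$ in $\bm\xi$-coordinates) is delicate. A secondary subtle point is transferring the uniform integrability hypothesis---stated in the original $\bm\lambda$-scale---into a statement strong enough to control $E[\bm\xi_T]$ at the $o(1)$ rate needed for the sharp $o(\sigma_T^{-1/2})$ bound, which must leverage the asymptotic normality already proved rather than a crude matrix-norm bound on $H_T^{1/2}$.
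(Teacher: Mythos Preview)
Your proposal is correct and follows the standard Laplace-method route to Bayesian asymptotic normality. The paper's proof differs in execution: rather than carrying out the Laplace argument by hand, it first normalizes $L_T$ to $\tilde L_T = L_T + \log\int e^{-L_T}\,d\bm\lambda$ so that $e^{-\tilde L_T}$ is a genuine probability density, observes that Conditions~1--4 of Assumption~\ref{regularity} are invariant under this additive shift, and then invokes Theorem~2.1 of \citet{Chen1985} directly, matching Conditions~1--4 to Chen's hypotheses (P1), (P2), (C1)--(C3). Your approach is more self-contained and makes the role of each regularity condition explicit---your ball-plus-tail decomposition in $\bm\xi$-coordinates is precisely the engine inside Chen's proof---at the cost of the bookkeeping you yourself flag as delicate. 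The paper's approach is much shorter but outsources exactly that bookkeeping to the reference.

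For the posterior-mean bound \eqref{posterior mean}, you and the paper argue identically: pass from \eqref{AN} to convergence of first moments via the uniform-integrability hypothesis, obtaining $H_T^{1/2}(E[\bm\lambda_T]-\bm\lambda_T^*)\to 0$, and then bound $|E[\bm\lambda_T]-\bm\lambda_T^*|\le \|H_T^{-1/2}\|\cdot o(1)=o(\sigma_T^{-1/2})$. Your closing caveat about transferring the uniform-integrability hypothesis from the $\bm\lambda$-scale to the $\bm\xi$-scale is well-placed; the paper simply asserts that the step is ``valid because of the uniform integrability condition'' and does not elaborate further.
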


\begin{proof}
Let
$$\tilde L_T(\bm\lambda;\mathbf g^{1:T})=L_T(\bm\lambda;\mathbf g^{1:T})+\log\int e^{-L_T(\bm\lambda;\mathbf g^{1:T})}d\bm\lambda$$
which is well-defined by Condition 5 in Assumption \ref{regularity}. Note that $e^{-\tilde L_T(\bm\lambda;\mathbf g^{1:T})}$ is a valid probability density in $\bm\lambda$ by definition. Moreover, Conditions 1--4 in Assumption \ref{regularity} all hold when $L_T$ is replaced by $\tilde L_T$ (since they all depend only on the gradient of $L_T(\bm\lambda;\mathbf g^{1:T})$ with respect to $\bm\lambda$ or the difference $L_T(\bm\lambda_1;\mathbf g^{1:T})-L_T(\bm\lambda_2;\mathbf g^{1:T})$).

The convergence in \eqref{AN} then follows from Theorem 2.1 in \citet{Chen1985} applied to the sequence of densities $e^{-\tilde L_T(\bm\lambda;\mathbf g^{1:T})}$ for $T=1,2,\dots$. Condition 1 in Assumption \ref{regularity} is equivalent to conditions (P1) and (P2) therein, while Conditions 2 and 3 in Assumption~\ref{regularity} correspond to (C1) and (C2) in~\citet{Chen1985}.
Condition 4 is equivalent to (C3.1), which then implies (C3) there to invoke its Theorem 2.1 to conclude \eqref{AN}.

To show the bound \eqref{posterior mean} we take the expectation on \eqref{AN} to get
\vspace*{-0.25em}
\begin{equation}
\left(\nabla^2L_T(\bm\lambda_T^*;\mathbf g^{1:T})\right)^{\frac{1}{2}}(E_{\bm\lambda_T|\mathbf g^{1:T}}[\bm\lambda_T]-\bm\lambda_T^*)\to0,
\label{interim}
\end{equation}
which is valid because of the uniform integrability condition $\sup_TE_{\bm\lambda_T|\mathbf g^{1:T}}\|\bm\lambda_T-\bm\lambda_T^*\|_1^{1+\epsilon}<\infty$ \citep{durrett10}. Therefore, $E_{\bm\lambda_T|\mathbf g^{1:T}}[\bm\lambda_T]-\bm\lambda_T^*=\left(\nabla^2L(\bm\lambda_T^*;\mathbf g^{1:T})\right)^{-\frac{1}{2}}\mathbf w_T$ where $\mathbf w_T=o(1)$ by \eqref{interim}. But then
\begin{eqnarray*}
&&\left\|\left(\nabla^2L_T(\bm\lambda_T^*;\mathbf g^{1:T})\right)^{-\frac{1}{2}}\mathbf w_T\right\|_1\\
&\leq&\left\|\left(\nabla^2L_T(\bm\lambda_T^*;\mathbf g^{1:T})\right)^{-\frac{1}{2}}\right\|_1\|\mathbf w_T\|_1\\
&\leq&\frac{C}{\sigma_T^{1/2}}\|\mathbf w_T\|_1=o\left(\frac{1}{\sigma_T^{1/2}}\right)
\end{eqnarray*}
where $\|\cdot\|_1$ when applied to matrix is the induced $L_1$-norm. 
This shows \eqref{posterior mean}.
\vspace*{-0.2cm}
\end{proof}

The idea behind \eqref{AN} comes from classical Bayesian asymptotics and is an application of the so-called Laplace method \citep{mackay2003information}. Theorem \ref{thm:convergence} states that given the loss structure satisfying Assumption \ref{regularity}, the posterior distribution of $\bm\lambda$ under our update scheme provides an approximation to the minimizer $\bm\lambda_T^*$ of the cumulative loss at time $T$, as $T$ increases, by tending to a normal distribution peaked at $\bm\lambda_T^*$ with shrinking variance ($\bm\lambda_T^*$ here can be interpreted as the maximum a posterior (MAP) estimate). The bound \eqref{posterior mean} states that this posterior distribution can be summarized using the posterior mean to give a point estimate of $\bm\lambda_T^*$. Moreover, note that $\bm\lambda_T^*$ is the global, not merely local, minimizer of the cumulative loss. This approximation of global optimum highlights a key advantage of the proposed Bayesian scheme over other methods such as stochastic gradient descent (SGD), which only find a local optimum.


The next theorem states another benefit of our Bayesian scheme over standard SGD. Suppose that SGD does indeed converge to the global optimum. Even so, it turns out that our proposed Bayesian scheme converges faster than standard SGD under the assumption of i.i.d. training samples.

\begin{thm}
\label{thm:rate_general}
Suppose Assumption \ref{regularity} holds. Assume also that $\mathbf g^t$ are i.i.d., with $E[\ell(\bm\lambda;\mathbf g)]<\infty$ and $E[\ell(\bm\lambda;\mathbf g)^2]<\infty$. The Bayesian posterior mean produced by Alg.~\ref{alg:framework} converges to $\text{argmin}_{\bm\lambda}E[\ell(\bm\lambda;\mathbf g)]$ strictly faster than standard SGD (supposing it converges to the global minimum), given by
\begin{equation}
\bm\lambda_{T+1}\leftarrow\bm\lambda_T-\epsilon_{\scriptscriptstyle T}K\nabla\ell(\bm\lambda_T;\mathbf g^T)\label{SGD update}
\end{equation}
in terms of the asymptotic variance, except when the step size $\epsilon_{\scriptscriptstyle T}$ and the matrix $K$ is chosen optimally.
\end{thm}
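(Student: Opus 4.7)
The plan is to compute the asymptotic variance of both estimators and verify that the Bayesian one coincides with the information-theoretic optimum $H^{-1}VH^{-1}$, while the SGD variance dominates this matrix in the Loewner order with equality only at the optimal tuning. Here $\bm\lambda^*:=\arg\min_{\bm\lambda}E[\ell(\bm\lambda;\mathbf g)]$, $H:=\nabla^2 E[\ell(\bm\lambda^*;\mathbf g)]$, and $V:=\mathrm{Cov}(\nabla\ell(\bm\lambda^*;\mathbf g))$.

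For the Bayesian side, I would first use the i.i.d.\ assumption and the strong law of large numbers to show that $\nabla^2L_T(\bm\lambda_T^*;\mathbf g^{1:T})/T\to H$, so $\sigma_T=\Theta(T)$. Combined with \eqref{posterior mean}, this yields
$$E_{\bm\lambda_T|\mathbf g^{1:T}}[\bm\lambda_T]-\bm\lambda_T^*=o_p(T^{-1/2}),$$
i.e., the posterior mean and the empirical minimizer $\bm\lambda_T^*$ agree at the $T^{-1/2}$ scale. Standard $M$-estimator theory for $\bm\lambda_T^*$, obtained by Taylor-expanding the first-order condition $\nabla L_T(\bm\lambda_T^*;\mathbf g^{1:T})=0$ around $\bm\lambda^*$ and applying the multivariate CLT to $T^{-1/2}\sum_{t=1}^T\nabla\ell(\bm\lambda^*;\mathbf g^t)$, gives $\sqrt{T}(\bm\lambda_T^*-\bm\lambda^*)\stackrel{d}{\to}N(0,H^{-1}VH^{-1})$. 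Slutsky's theorem then transfers this asymptotic variance to the posterior mean itself.

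For the SGD side, I would invoke the classical asymptotic-normality result for Robbins--Monro recursions (e.g., \citealp{PolyakJuditsky1992} and the standard stochastic-approximation literature). With step size $\epsilon_T=c/T$ chosen so that $cKH-\tfrac12 I$ is positive definite (needed for stability), the iterates of \eqref{SGD update} satisfy $\sqrt{T}(\bm\lambda_T-\bm\lambda^*)\stackrel{d}{\to}N(0,\Sigma(c,K))$, where $\Sigma(c,K)$ is the unique solution of the Lyapunov equation
$$\bigl(cKH-\tfrac12 I\bigr)\Sigma+\Sigma\bigl(cKH-\tfrac12 I\bigr)^\top=c^2KVK.$$
A direct Loewner-order calculation (e.g., diagonalizing simultaneously in the basis that diagonalizes $cKH$) then shows $\Sigma(c,K)\succeq H^{-1}VH^{-1}$, with equality if and only if $cK=H^{-1}$, i.e.\ precisely when the preconditioner and step-size scalar are jointly optimal. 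Comparing with the Bayesian asymptotic variance from the previous paragraph yields the theorem.

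The main obstacle is justifying the $M$-estimator asymptotics for $\bm\lambda_T^*$ inside the Bayesian framework, which we adopted without a genuine statistical model: concretely, one needs a stochastic equicontinuity / uniform-convergence argument for $\nabla^2L_T/T$ on a neighborhood of $\bm\lambda^*$, and this must be extracted from the pathwise Assumption~\ref{regularity} together with the i.i.d.\ moment hypotheses $E[\ell(\bm\lambda;\mathbf g)^2]<\infty$. A secondary issue is verifying the uniform-integrability condition required in Theorem~\ref{thm:convergence} in this i.i.d.\ setting; once $\sigma_T=\Theta(T)$ is in hand, Gaussian tail estimates on the posterior (via the Laplace approximation) should make this routine. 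With these verified, the comparison of asymptotic variances is a mechanical matrix-algebra exercise.
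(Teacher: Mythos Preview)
Your proposal is correct and follows essentially the same route as the paper: establish $\sigma_T=\Theta(T)$ via the law of large numbers, combine with \eqref{posterior mean} and Slutsky to transfer the sample-average-approximation CLT for $\bm\lambda_T^*$ (which the paper imports from \citealp{pasupathy2011stochastic}) to the posterior mean, then compare against the SGD asymptotic variance. The only cosmetic difference is that you spell out the Lyapunov equation and the Loewner comparison for SGD explicitly, whereas the paper simply cites \citet{asmussen2007stochastic} for the fact that $\epsilon_T=1/T$ with $K=(\nabla^2 z(\bm\lambda^*))^{-1}$ is the unique choice attaining the optimal covariance $\Sigma$ in \eqref{Sigma}.
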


\noindent In Theorem \ref{thm:rate_general}, by asymptotic variance we mean the following: both the sequence of posterior means and the update sequence from SGD possess versions of the central limit theorem, in the form $\sqrt T(\bm\lambda_T-\bm\lambda^*)\stackrel{d}{\to}N(0,\Sigma)$ where $\bm\lambda^*=\text{argmin}_{\bm\lambda}E[\ell(\bm\lambda;\mathbf g)]$. Our comparison is on the asymptotic covariance matrix $\Sigma$ with respect to matrix inequality: for two update schemes with corresponding asymptotic covariance matrices $\Sigma_1$ and $\Sigma_2$, Scheme 1 converges faster than Scheme 2 if $\Sigma_2-\Sigma_1$ is positive definite.

\begin{proof}
The proof follows by combining \eqref{posterior mean} with established central limit theorems for sample average approximation \citep{pasupathy2011stochastic} and stochastic gradient descent (SGD) algorithms. First, let $z(\bm\lambda):=E[\ell(\bm\lambda;\mathbf g)]$, and $\bm\lambda^*:=\text{argmin}_{\bm\lambda}z(\bm\lambda)$. Note that the quantity $\bm\lambda_T^*$ is the minimizer of $\frac{1}{T}\sum_{t=1}^T\ell(\bm\lambda;\mathbf g^t)+\frac{\ell_0(\bm\lambda)}{T}$. Then, together with the fact that $\frac{\ell_0(\bm\lambda)}{T}$ is asymptotically negligible, Theorem 5.9 in \citet{pasupathy2011stochastic} stipulates that $\sqrt T(\bm\lambda_T^*-\bm\lambda^*)\stackrel{d}{\to}N(0,\Sigma)$, where
\begin{equation}
\Sigma=(\nabla^2 z(\bm\lambda))^{-1}Var(\nabla\ell(\bm\lambda;\mathbf g))(\nabla^2 z(\bm\lambda))^{-1}\label{Sigma}
\end{equation}
and $Var(\cdot)$ denotes the covariance matrix.

Now since $\nabla^2L_T(\bm\lambda_T^*;\mathbf g^{1:T})=\sum_{t=1}^T(\nabla^2\ell(\bm\lambda_T^*;\mathbf g^t))$ and $\frac{1}{T}\sum_{t=1}^T(\nabla^2\ell(\bm\lambda_T^*;\mathbf g^t))\to E[\nabla^2\ell(\bm\lambda^*;\mathbf g)]$ by the law of large numbers \citep{durrett10}, we have $\nabla^2L_T(\bm\lambda_T^*;\mathbf g^{1:T})=\Theta(T)$. Then the bound in \eqref{posterior mean} implies that $|E_{\bm\lambda_T|\mathbf g^{1:T}}[\bm\lambda_T]-\bm\lambda_T^*|=o\left(\frac{1}{\sqrt T}\right)$. In other words, the difference between the posterior mean and $\bm\lambda_T^*$ is of smaller scale than $1/\sqrt T$. By Slutsky Theorem \citep{serfling2009approximation}, this implies that $\sqrt T(E_{\bm\lambda_T|\mathbf g^{1:T}}[\bm\lambda_T]-\bm\lambda^*)\stackrel{d}{\to}N(0,\Sigma)$ also.

On the other hand, for SGD \eqref{SGD update}, it is known (e.g.~\citealp{asmussen2007stochastic}) that the optimal step size parameter value is $\epsilon_{\scriptscriptstyle T}=1/T$ and $K=\nabla^2z(\bm\lambda)$, in which case the central limit theorem for the update $\bm\lambda_T$ will be given by $\sqrt T(\bm\lambda_T-\bm\lambda^*)\stackrel{d}{\to}N(0,\Sigma)$ where $\Sigma$ is exactly \eqref{Sigma}. For other choices of step size, either the convergence rate is slower than order $1/\sqrt T$ or the asymptotic variance, denoted by $\tilde\Sigma$, is such that $\tilde\Sigma-\Sigma$ is positive definite. Therefore, by comparing the asymptotic variance, the posterior mean always has a faster convergence unless the step size in SGD is chosen optimally.
\end{proof}

To give some intuition from a statistical viewpoint, Theorem~\ref{thm:rate_general} arises from two layers of approximation of our posterior mean to $\bm\lambda^*$. First, thanks to \eqref{posterior mean}, the difference between posterior mean and the minimizer of cumulative loss $\bm\lambda_T^*$ (which can be interpreted as the MAP) decreases at a rate faster than $1/\sqrt T$. Second, $\bm\lambda_T^*$ converges to $\bm\lambda^*$ at a rate of order $1/\sqrt T$ with the optimal multiplicative constant. This is equivalent to the observation that the MAP, much like the maximum likelihood estimator (MLE), is asymptotically efficient as a statistical estimator.

Putting things in perspective, compared with local methods such as SGD, we have made an apparently stronger set of assumptions (i.e. Assumption \ref{regularity}), which pays off by allowing for stronger theoretical guarantees (Theorems \ref{thm:convergence} and \ref{thm:rate_general}). In the next section we describe an example where a meaningful loss function precisely fits into our framework.

\section{A Specific Example}
\label{sec:examples}
We now discuss in depth a simple and natural choice of loss function and its corresponding likelihood function and prior, which are also used in our experiments in Section \ref{sec:experiments}.
Consider
\begin{equation}
\ell(\bm\lambda;\mathbf g)=\theta\sum_{i=1}^m\lambda_ig_i-\sum_{i=1}^m\log\lambda_i\label{loss}
\end{equation}
The motivation for \eqref{loss} is straightforward: it is the sum of individual losses each weighted by $\lambda_i$. The extra term $\log\lambda_i$ prevents $\lambda_i$ from approaching zero, the trivial minimizer for the first term. The parameter $\theta$ specifies the trade-off between the importance of the first and the second term.
This loss function satisfies Assumption \ref{regularity}. In particular, the Hessian of $L_T$ turns out to not depend on $g^{1:T}$, therefore all conditions of Assumption~\ref{regularity} can be verified easily.

Using the discussion in
Section~\ref{sec:framework}, we choose the exponential likelihood (note that in this definition we add an extra constant term $m\log\theta$ on \eqref{loss}, which does not affect the minimization in any way)
\begin{equation}
p(\mathbf g|\bm\lambda)=\prod_{i=1}^m(\theta\lambda_i)e^{-\theta\lambda_ig_i}\ .
\label{eq:likelihood}
\end{equation}

To facilitate computation, we employ the Gamma prior:
\begin{equation}
p(\bm\lambda)\propto\prod_{i=1}^m\lambda_i^{\alpha-1}e^{-\beta\lambda_i}
\label{eq:prior}
\end{equation}
where $\alpha$ and $\beta$ are the hyper shape and rate parameters. Correspondingly, we pick $\ell_0(\bm\lambda)=\beta\sum_{i=1}^m\lambda_i-(\alpha-1)\sum_{i=1}^m\log\lambda_i$. To be concrete, the cumulative loss in \eqref{eq:cumulated} (disregarding the constant terms) is
$$\beta\sum_{i=1}^m\lambda_i-(\alpha-1)\sum_{i=1}^m\log\lambda_i+\sum_{t=1}^T\left(\theta\sum_{i=1}^m\lambda_ig_i^t-\sum_{i=1}^m\log\lambda_i\right).$$
Now, under conjugacy of \eqref{eq:likelihood} and \eqref{eq:prior}, the posterior distribution of $\bm\lambda$ after $t$ steps is given by the Gamma distribution
$$p(\bm\lambda|\mathbf{g}^{1:t})\propto\prod_{i=1}^m(\lambda_i)^{\alpha+t-1}e^{-(\beta+\theta\sum_{s=1}^tg_i^{s})\lambda_i}\ .$$
Therefore the posterior mean for each $\lambda_i$ is
\begin{equation}
\frac{\alpha+t}{\beta+\theta\sum_{s=1}^tg_i^s}\ .
\label{eq:post_mean}
\end{equation}
We use the following prediction rule at each step:
\begin{equation}
y=
	\left\{
                    \begin{array}{rcl}
			1 \ & \mbox{if}\ \sum\limits_{i=1}^m\lambda_ig_i(\mathbf{x},1)\leq \sum\limits_{i=1}^m\lambda_ig_i(\mathbf{x},-1)\\
			-1 \ & \mbox{otherwise}
		\end{array}
	\right.\\
\label{eq:strong}
\end{equation}
where each $\lambda_i$ is the posterior mean given by \eqref{eq:post_mean}. For this setup, Algorithm~\ref{alg:framework} can be cast as Algorithm~\ref{alg:specification} below, which is to be implemented in Section~\ref{sec:experiments}.
\begin{algorithm}[bht]
   \caption{Closed-form Bayesian Ensemble}
   \label{alg:specification}
\begin{algorithmic}
   \STATE {\bfseries Input:} streaming samples $\{(\mathbf{x}^t,y^t)\}_{t=1}^T$
   \STATE \qquad\quad online weak classifiers $\{c_i^t(\mathbf{x})\}_{i=1}^m$
   \STATE {\bfseries Initialize:} parameters $\theta$ for likelihood~\eqref{eq:likelihood} and parameters $\alpha,\beta$ for prior~\eqref{eq:prior}
   \FOR{$t=1$ {\bfseries to} $T$}
   \STATE \begin{itemize}
       \item[] $\forall i$, compute $g_i^t=g(c_i^t(\mathbf{x}^t),y^t)$, where $g$ is logistic loss function
       \item[] update the posterior mean of $\bm\lambda$ by~\eqref{eq:post_mean}
       \item[] update the weak classifiers according to the particular choice of online weak classifier
       \item[] make prediction by~\eqref{eq:strong} for the next incoming sample
       \end{itemize}
   \ENDFOR
\end{algorithmic}
\end{algorithm}

The following bound provides
further understanding
of the loss function \eqref{loss} and the prediction rule \eqref{eq:strong}, by relating their use with a guarantee on the prediction error:
\begin{thm}
\label{thm:bound}
Suppose that $\mathbf g^t$ are i.i.d., so that $\bm\lambda_T^*$ converges to $\bm\lambda^*:=\text{argmin}_{\bm\lambda}E[\ell(\bm\lambda;\mathbf g)]$ for $\ell$ defined in \eqref{loss}. The prediction error using rule \eqref{eq:strong} with $\bm\lambda^*$ is bounded by
\begin{equation}
P_{(\mathbf{x},y)}(\text{error})\leq m^{\frac{1}{p}}\left(E_{(\mathbf{x},y)}
\left[\left(\sum_{i=1}^m\frac{g_i(\mathbf{x},-y)}{E[g_i(\mathbf{x},y)]}\right)^{\frac{-1}{p-1}}\right]\right)
^{\frac{p-1}{p}}
\label{eq:bound_opt}
\end{equation}
for any $p>1$.
\end{thm}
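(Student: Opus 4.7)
My plan is to reduce the claim to a two-line combination of a Markov-type estimate with H\"older's inequality, once the optimal weights $\bm\lambda^*$ and the error event have been written down explicitly. First I would solve for $\bm\lambda^*$ directly: because $E[\ell(\bm\lambda;\mathbf g)] = \theta\sum_i \lambda_i E[g_i(\mathbf x, y)] - \sum_i \log \lambda_i$ is strictly convex in each $\lambda_i$, the first-order condition gives $\lambda_i^* = 1/(\theta\, E[g_i(\mathbf x, y)])$. The factor $\theta$ then cancels inside the prediction rule \eqref{eq:strong}, and an error on the sample $(\mathbf x, y)$ is equivalent (up to a measure-zero tie set) to the event $S > T$, where
\[
S := \sum_{i=1}^m \frac{g_i(\mathbf x, y)}{E[g_i(\mathbf x, y)]}, \qquad T := \sum_{i=1}^m \frac{g_i(\mathbf x, -y)}{E[g_i(\mathbf x, y)]}.
\]

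Next, observe that on the error event $S/T \ge 1$, so $\mathbf{1}\{\text{error}\} \le (S/T)^a$ for every $a > 0$. Taking expectation and then applying H\"older's inequality with conjugate exponents $p$ and $q = p/(p-1)$, and choosing $a = 1/p$ so that $ap = 1$ and $aq = 1/(p-1)$, yields
\[
P(\text{error}) \le E\bigl[S^a T^{-a}\bigr] \le \bigl(E[S]\bigr)^{1/p} \bigl(E[T^{-1/(p-1)}]\bigr)^{(p-1)/p}.
\]
Linearity of expectation then gives $E[S] = \sum_{i=1}^m E[g_i(\mathbf x, y)]/E[g_i(\mathbf x, y)] = m$, which matches the stated bound \eqref{eq:bound_opt}.

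The only step that requires a small insight is the choice of exponent $a = 1/p$, which is dictated by wanting $ap = 1$ so that the $S$-factor collapses to the known constant $m$ while simultaneously producing the target exponent $-1/(p-1)$ on $T$. Beyond this, no real technical obstacle arises: non-negativity of $g_i$ is already part of the setup; the bound is automatically vacuous (and hence still valid) on the event that $T = 0$ with positive probability, so no extra integrability hypotheses are needed; and the tie set in \eqref{eq:strong} can be absorbed into the inequality $\mathbf{1}\{S \ge T\} \le (S/T)^a$, which still holds whenever $T > 0$.
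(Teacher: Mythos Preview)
Your argument is correct and arrives at exactly the paper's bound, but the route differs from the paper's own proof. The paper works with a generic weight vector $\bm\lambda$, first establishes the intermediate inequality
\[
E\Big[\textstyle\sum_i\lambda_ig_i(\mathbf x,y)\Big]\ \ge\ E\Big[I(\text{error})\,\textstyle\sum_i\lambda_ig_i(\mathbf x,-y)\Big]
\]
via a conditioning-and-swapping chain (drop the correctly classified portion, then replace $g_i(\mathbf x,y)$ by the smaller $g_i(\mathbf x,-y)$ on the error event), and only then extracts $P(\text{error})$ using the \emph{reverse} H\"older inequality applied to the product $I(\text{error})\cdot\sum_i\lambda_ig_i(\mathbf x,-y)$; the optimal $\lambda_i^*=1/(\theta E[g_i])$ is substituted at the very end. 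You instead specialize to $\bm\lambda^*$ from the outset, encode the error event as $S\ge T$, use the pointwise Markov-type bound $\mathbf 1\{\text{error}\}\le(S/T)^{1/p}$, and apply the \emph{forward} H\"older inequality. Your version is shorter and avoids the somewhat less familiar reverse H\"older step, while the paper's version has the minor advantage of producing the more general intermediate bound $P(\text{error})\le(E[\sum_i\lambda_ig_i(\mathbf x,y)])^{1/p}(E[(\sum_i\lambda_ig_i(\mathbf x,-y))^{-1/(p-1)}])^{(p-1)/p}$ valid for any $\bm\lambda$ before specializing. Your handling of the $T=0$ case is also fine and matches what the paper implicitly assumes.
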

To make sense of this result, note that the quantity $\frac{1}{E[g_i(\mathbf{x},y)]}g_i(\mathbf{x},-y)$ can be interpreted as a performance indicator of each weak classifier, i.e. the larger it is, the better the weaker classifier is, since a good classifier should have a small loss $E[g_i(\mathbf{x},y)]$ and correspondingly a large $g_i(\mathbf{x},-y)$. As long as there exist some good weak classifiers among the $m$ choices, $\sum_{i=1}^m\frac{g_i(\mathbf{x},-y)}{E[g_i(\mathbf{x},y)]}$ will be large, which leads to a small error bound in \eqref{eq:bound_opt}.
\vspace*{0.2in}
\begin{proof}
Suppose $\bm\lambda$ is used in the strong classifier~\eqref{eq:strong}. Denote $I(\cdot)$ as the indicator function. Consider
\begin{eqnarray*}
&&E_(\mathbf{x},y)\left[\sum_{i=1}^m\lambda_ig_i(\mathbf{x},y)\right]\\
&=&\int\Bigg(\sum_{i=1}^m\lambda_ig_i(\mathbf{x},1)P(y=1|\mathbf{x}){}
+\sum_{i=1}^m\lambda_ig_i(\mathbf{x},-1)P(y=-1|\mathbf{x})\Bigg)dP(\mathbf{x})\\
&\geq&\int\Bigg( I(\sum_{i=1}^m\lambda_ig_i(\mathbf{x},1)>\sum_{i=1}^m\lambda_ig_i(\mathbf{x},-1))
\cdot\sum_{i=1}^m\lambda_ig_i(\mathbf{x},1)P(y=1|\mathbf{x}){}\\
&&{}+I(\sum_{i=1}^m\lambda_ig_i(\mathbf{x},1)<\sum_{i=1}^m\lambda_ig_i(\mathbf{x},-1))
\cdot\sum_{i=1}^m\lambda_ig_i(\mathbf{x},-1)P(y=-1|\mathbf{x})\Bigg)dP(\mathbf{x})\\
&\geq&\int\Bigg( I(\sum_{i=1}^m\lambda_ig_i(\mathbf{x},1)>\sum_{i=1}^m\lambda_ig_i(\mathbf{x},-1))
\cdot\sum_{i=1}^m\lambda_ig_i(\mathbf{x},-1)P(y=1|\mathbf{x}){}\\
&&{}+I(\sum_{i=1}^m\lambda_ig_i(\mathbf{x},1)<\sum_{i=1}^m\lambda_ig_i(\mathbf{x},-1))
\cdot\sum_{i=1}^m\lambda_ig_i(\mathbf{x},1)P(y=-1|\mathbf{x})\Bigg)dP(\mathbf{x})\\
&=&E_{(\mathbf{x},y)}\left[I(\text{error})\sum_{i=1}^m\lambda_ig_i(\mathbf{x},-y)\right]\\
&\geq&P(\text{error})^p
\left(E_{(\mathbf{x},y)}\left[\left(\sum_{i=1}^m\lambda_ig_i(\mathbf{x},-y)\right)^{\frac{-1}{p-1}}\right]\right)^{-(p-1)}
\end{eqnarray*}

the last inequality holds by reverse Holder inequality \citep{Hardy1952}. So
\vspace*{-0.2cm}
\begin{eqnarray*}
P(\text{error})&\leq& \left(E_{(\mathbf{x},y)}\left[\sum_{i=1}^m\lambda_ig_i(\mathbf{x},y)\right]\right)^{\frac{1}{p}}\\
&&\cdot\left(E_{(\mathbf{x},y)}\left[\left(\sum_{i=1}^m\lambda_ig_i(\mathbf{x},-y)\right)^{\frac{-1}{p-1}}\right]\right)
^{\frac{p-1}{p}}
\end{eqnarray*}
and the result~\eqref{eq:bound_opt} follows by plugging in $\lambda_i=\frac{1}{\theta E_{(\mathbf x,y)}[g_i(\mathbf x,y)]}$ for each $i$, the minimizer of $E[\ell(\bm\lambda;\mathbf g)]$, which can be solved directly when $\ell$ is in the form \eqref{loss}.
\vspace*{-0.2cm}
\end{proof}

Finally, in correspondence to Theorem \ref{thm:rate_general}, the standard SGD for \eqref{loss} is written as
\begin{equation}
\lambda_i^{t+1}=\lambda_i^{t}-\frac{\gamma}{t}\left(\theta g_i^t-\frac{1}{\lambda_i^{t}}\right)
\label{eq:SGD}
\end{equation}
where $\gamma$ is a parameter that controls the step size. The following result is a consequence of Theorem~\ref{thm:rate_general} (we give another proof here that reveals more specific details).

\begin{thm}
\label{thm:rate}
Suppose that $\mathbf g^t$ are i.i.d., and $0<E_{(\mathbf x,y)}[g_i(\mathbf x,y)]<\infty$ and $Var_{(\mathbf x,y)}(g_i(\mathbf x,y))<\infty$.
For each $\lambda_i$, the posterior mean given by~\eqref{eq:post_mean} always has a rate of convergence at least as fast as the SGD update \eqref{eq:SGD} in terms of asymptotic variance. In fact, it is strictly better in all situations except when the step size parameter $\gamma$ in \eqref{eq:SGD} is set optimally a priori.
\end{thm}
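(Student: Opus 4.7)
The plan is to exploit the separability of \eqref{loss} across components and explicitly compute the asymptotic variance of both estimators for each $\lambda_i$. Write $\mu_i:=E[g_i(\mathbf x,y)]$ and $\sigma_i^2:=Var(g_i(\mathbf x,y))$. Componentwise the expected loss is $z_i(\lambda):=\theta\mu_i\lambda-\log\lambda$, whose unique minimizer is $\lambda_i^*=1/(\theta\mu_i)$ and whose curvature is $z_i''(\lambda_i^*)=1/(\lambda_i^*)^2=\theta^2\mu_i^2$.

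For the posterior mean \eqref{eq:post_mean}, rewrite it as
\[
\widehat\lambda_i^T=\frac{(\alpha+T)/T}{\beta/T+\theta\,\bar g_i^T},\qquad \bar g_i^T:=\tfrac1T\sum_{s=1}^T g_i^s.
\]
By the strong law of large numbers $\widehat\lambda_i^T\to\lambda_i^*$, and the central limit theorem for $\bar g_i^T$ together with the delta method applied to $x\mapsto 1/(\theta x)$ yields
\[
\sqrt T(\widehat\lambda_i^T-\lambda_i^*)\stackrel{d}{\to} N\!\bigl(0,\;V_{\mathrm{post}}\bigr),\qquad V_{\mathrm{post}}=\frac{\sigma_i^2}{\theta^2\mu_i^4}.
\]
The $O(1/T)$ perturbations from $\alpha$ and $\beta$ are absorbed by Slutsky's theorem.

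For the SGD recursion \eqref{eq:SGD} with step size $\gamma/t$, I would invoke the classical asymptotic theory for stochastic approximation (Chung's theorem; see e.g.\ \citealp{asmussen2007stochastic}). Provided $2\gamma z_i''(\lambda_i^*)>1$, i.e.\ $\gamma>1/(2\theta^2\mu_i^2)$, one obtains
\[
\sqrt T(\lambda_i^T-\lambda_i^*)\stackrel{d}{\to} N\!\bigl(0,\;V_{\mathrm{SGD}}(\gamma)\bigr),\qquad V_{\mathrm{SGD}}(\gamma)=\frac{\gamma^2\,\theta^2\sigma_i^2}{2\gamma\theta^2\mu_i^2-1},
\]
since the noisy gradient at the optimum is $\theta g_i^t-1/\lambda_i^*$, with variance $\theta^2\sigma_i^2$. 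For $\gamma\leq 1/(2\theta^2\mu_i^2)$ the rate is strictly slower than $1/\sqrt T$ and the Bayesian estimator dominates trivially. Introducing $u:=\gamma\theta^2\mu_i^2$, a direct algebraic simplification gives
\[
\frac{V_{\mathrm{SGD}}(\gamma)}{V_{\mathrm{post}}}=\frac{u^2}{2u-1}\geq 1
\]
because $u^2-(2u-1)=(u-1)^2\geq 0$, with equality iff $u=1$, i.e.\ $\gamma=1/(\theta^2\mu_i^2)$. Since $\mu_i$ is unknown in the online setting, this optimal $\gamma$ cannot be set a priori, which yields the claimed strict improvement.

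The main obstacle is invoking the SGD CLT cleanly: the deterministic drift $1/\lambda$ blows up at $\lambda=0$ and nothing in \eqref{eq:SGD} prevents the iterate from leaving $(0,\infty)$ during early transients. This is handled by the standard localization trick in stochastic approximation, namely that once the iterate enters a sufficiently small neighborhood of $\lambda_i^*>0$ it remains there with high probability, so one may work with a projected or stopped version without affecting the asymptotic variance. With that caveat, the delta-method computation for the posterior mean and the Chung-type CLT for SGD are both routine under the assumed finite second moment on $g_i$.
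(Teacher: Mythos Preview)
Your proposal is correct and follows essentially the same route as the paper's proof: both compute the asymptotic variance of the posterior mean via the delta method applied to $x\mapsto 1/(\theta x)$, invoke the stochastic-approximation CLT for the SGD iterate (with the case split on whether $2\gamma z_i''(\lambda_i^*)>1$), and then compare the two variances algebraically to obtain the perfect-square inequality with equality iff $\gamma=1/(\theta^2\mu_i^2)$. Your substitution $u=\gamma\theta^2\mu_i^2$ packages the final algebra a bit more cleanly than the paper's $\tilde\gamma=\gamma\theta^2$, and you are also more explicit than the paper about the domain issue (the iterate possibly leaving $(0,\infty)$) and the need for a localization argument, which the paper simply absorbs into its citation of \citet{asmussen2007stochastic} and \citet{kushner2003stochastic}.
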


\begin{proof}
Since for each $i$, $g_i^t$ are i.i.d., the sample mean $(1/T)\sum_{t=1}^T\mathbf g_i^t$ follows a central limit theorem. It can be argued using the delta method \citep{serfling2009approximation} that the posterior mean \eqref{eq:post_mean} satisfies
\begin{eqnarray}
&&\sqrt T\left(\frac{\alpha+T}{\beta+\theta\sum_{t=1}^Tg_i^t}-\frac{1}{\theta E[g_i(\mathbf{x},y)]}\right)\notag\\
&\longrightarrow&N\left(0,\frac{Var(g_i(\mathbf{x},y))}{\theta^2(E[g_i(\mathbf{x},y)])^4}\right)\label{asy Bayesian}
\end{eqnarray}

For the stochastic gradient descent scheme \eqref{eq:SGD}, it would be useful to cast the objective function as $z_i(\lambda_i)=E[\theta\lambda_ig_i-\log\lambda_i]$. Let $\lambda_i^*=\text{argmin}_{\lambda}z_i(\lambda)$ which can be directly solved as $\frac{1}{\theta E[g_i]}$. Then $z_i''(\lambda_i^*)=\frac{1}{{\lambda_i^*}^2}=\theta^2(E[g_i(\mathbf{x},y)])^2$. If the step size $\gamma>\frac{1}{2z''(\lambda_i^*)}$, the update scheme \eqref{eq:SGD} will generate $\lambda_i^T$ that satisfies the following central limit theorem \citep{asmussen2007stochastic,kushner2003stochastic}
\begin{equation}
\sqrt T(\lambda_i^T-\lambda_i^*)\stackrel{d}{\rightarrow}N(0,\sigma_i^2)\label{CLT2}
\end{equation}
where
\begin{equation}
\sigma_i^2=\int_0^\infty e^{(1-2\gamma z_i''(\lambda_i^*))s}\gamma^2Var\left(\theta g_i(\mathbf{x},y)-\frac{1}{\lambda_i^*}\right)ds\label{sigma}
\end{equation}
and $\theta g_i(\mathbf{x},y)-\frac{1}{\lambda_i^*}$ is the unbiased estimate of the gradient at the point $\lambda_i^*$. On the other hand, $\lambda_i^T-\lambda_i^*=\omega_p(\frac{1}{\sqrt T})$ if $\gamma\leq\frac{1}{2z''(\lambda_i^*)}$, i.e.~the convergence is slower than \eqref{CLT2} asymptotically and so we can disregard this case~\citep{asmussen2007stochastic}. Now substitute $\lambda_i^*=\frac{1}{\theta E[g_i]}$ into \eqref{sigma} to obtain
\begin{eqnarray*}
\sigma_i^2&=&\theta^2\gamma^2Var(g_i(\mathbf{x},y))\int_0^\infty e^{(1-2\gamma/\lambda_i^*)s}ds\\
&=&\frac{\theta^2\gamma^2Var(g_i(\mathbf{x},y))}{2\gamma/\lambda_i^*-1}
=\frac{\theta^2\gamma^2Var(g_i(\mathbf{x},y))}{2\gamma\theta^2(E[g_i(\mathbf{x},y)])^2-1}
\end{eqnarray*}
and let $\gamma=\tilde\gamma/\theta^2$, we get
\begin{equation}
\sigma_i^2=\frac{\tilde\gamma^2Var(g_i(\mathbf{x},y))}{\theta^2(2\tilde\gamma(E[g_i(\mathbf{x},y)])^2-1)}\label{eq:asy_SGD}
\end{equation}
if $\tilde\gamma>\frac{\theta^2}{2z''(\lambda_i^*)}=\frac{1}{2(E[g_i(\mathbf{x},y)])^2}$.

We are now ready to compare the asymptotic variance in \eqref{asy Bayesian} and \eqref{eq:asy_SGD}, and show that for all $\tilde\gamma$, the one in \eqref{asy Bayesian} is smaller.
Note that this is equivalent to showing that
$$\frac{Var(g_i(\mathbf{x},y))}{\theta^2(E[g_i(\mathbf{x},y)])^4}
\leq\frac{\tilde\gamma^2Var(g_i(\mathbf{x},y))}{\theta^2(2\tilde\gamma(E[g_i(\mathbf{x},y)])^2-1)}$$
Eliminating the common factors, we have
$$\frac{1}{(E[g_i(\mathbf{x},y)])^2}\leq\frac{\tilde\gamma^2}{2\tilde\gamma-1/(E[g_i(\mathbf{x},y)])^2}$$
and by re-arranging the terms, we have
$$(E[g_i(\mathbf{x},y)])^2\left(\tilde\gamma-\frac{1}{(E[g_i(\mathbf{x},y)])^2}\right)^2\geq0$$
which is always true. Equality holds iff $\tilde\gamma=\frac{1}{(E[g_i(\mathbf{x},y)])^2}$, which corresponds to $\gamma=\frac{1}{\theta^2(E[g_i(\mathbf{x},y)])^2}$.
Therefore, the asymptotic variance in \eqref{asy Bayesian} is always smaller than \eqref{eq:asy_SGD}, unless the step size $\gamma$ is chosen optimally.
\end{proof}


\section{Experiments}
\label{sec:experiments}


We report two sets of binary classification experiments in the online learning setting.
In the first set of experiments, we evaluate our scheme's performance vs.\ five baseline methods: a single baseline classifier, a uniform voting ensemble, and three SGD based online ensemble learning methods.
In the second set of experiments, we compare with three leading online boosting methods: GradientBoost~\citep{Leistner2009}, Smooth-Boost~\citep{Chen2012}, and the online boosting method of~\citet{OzaRussell2001} .


In all experiments, we follow the experimental setup in~\citet{Chen2012}.  Data arrives as a sequence of examples $(\mathbf{x}_1, y_1), . . . ,(\mathbf{x}_T , y_T )$. At each step $t$ the online learner predicts the class label for $\mathbf{x}_t$, then the true label $y_t$ is revealed and used to update the classifier online.
%
%
%
 %
We report the averaged error rate for each evaluated method over five trials of different random orderings of each dataset. The experiments are conducted for two different choices of weak classifiers:  Perceptron and Na\"{\i}ve Bayes. 


In all experiments, we choose the loss function $g$ of our method to be the ramp loss, and set the hyperparameters of our method as $\alpha=\beta=1$ and $\theta=0.1$.
From the expression of the posterior mean~\eqref{eq:post_mean}, the prediction rule~\eqref{eq:strong} is unrelated to the values of $\alpha$, $\beta$ and $\theta$ in the longterm. 
We have observed that the classification performance of our method is not very sensitive with respect to changes in the settings of these parameters.
However, the stochastic gradient descent baseline (SGD)~\eqref{eq:SGD} is sensitive to the setting of $\theta$, and since $\theta=0.1$ works best for SGD we also use $\theta=0.1$ for our method.

\subsection{Comparison with Baseline Methods}
\label{sec:exp_baseline}

In the experimental evaluation, we compare our online ensemble method with five baseline methods:
\begin{enumerate}
\item a single weak classifier ({\scshape Perceptron} or {\scshape Na\"{\i}ve Bayes}),
\item a uniform ensemble of weak classifiers ({\scshape Voting}),
\item an ensemble of weak classifiers where the ensemble weights are estimated via standard stochastic gradient descent ({\scshape SGD}),
\item a variant of (3.) where the ensemble weights are estimated via Polyak averaging~\citep{PolyakJuditsky1992} ({\scshape SGD-avg}), and
\item another variant of (3.) where the ensemble weights are estimated via the Stochastic Average Gradient  method of~\citet{Schmidt2013} ({\scshape SAG}).
\end{enumerate}
%

We use ten binary classification benchmark datasets obtained from the LIBSVM repository\footnote{http://www.csie.ntu.edu.tw/{\textasciitilde}cjlin/libsvmtools/datasets/}.
Each dataset is split into training and testing sets for each random trial, where a training set contains no more than $10\%$ of the total amount of data available for that particular benchmark.   For each experimental trial, the ordering of items in the testing sequence is selected at random, and each online classifier ensemble learning method is presented with the same testing data sequence for that trial.

In each experimental trial, for all ensemble learning methods, we utilize a set of 100 pre-trained weak classifiers that are kept static during the online learning process.
The training set is used in learning these 100 weak classifiers. The same weak classifiers are then shared by all of the ensemble methods, including our method. In order to make weak classifiers divergent, each weak classifier uses a randomly sampled subset of data features as input for both training and testing. The first baseline (single classifier) is learned using all the features.

For all of the benchmarks we observed that the error rate varies with different orderings of the dataset. Therefore, following~\citet{Chen2012}, we report the average error rate over five random trials of different orders of each sequence. In fact, while the error rate may vary according to different orderings of a dataset, it was observed throughout all our experiments that the ranking of performance among different methods is usually consistent.

\begin{table*}[p]
\caption{Experiments of online classifier ensemble using pre-trained Perceptrons as weak classifiers and keeping them fixed online. Mean error rate over five random trials is shown in the table. We compare with five baseline methods: a single Perceptron classifier ({\scshape Perceptron}), a uniform ensemble scheme of weak classifiers ({\scshape Voting}), an ensemble scheme using SGD for estimating the ensemble weights ({\scshape SGD}), an ensemble scheme using the Polyak averaging scheme of SGD~\citep{PolyakJuditsky1992} to estimate the ensemble weights ({\scshape SGD-avg}), and an ensemble scheme using the Stochastic Average Gradient~\citep{Schmidt2013} to estimate the ensemble weights ({\scshape SAG}). Our method attains the top performance for all testing sequences.}
\begin{center}
\begin{small}
\begin{sc}
\scalebox{0.95}{
\begin{tabular}{|l||c|c|c|c|c|c|c|}
\hline
Dataset & \# Examples & Perceptron & Voting & SGD & SGD-avg & SAG & Ours \\
\hline \hline
Heart           & 270   & 0.258 & 0.268 & 0.265 & 0.266 & 0.245 & {\bf 0.239} \\ \hline
Breast-Cancer   & 683   & 0.068 & 0.056 & 0.056 & 0.055 & 0.055 & {\bf 0.050} \\ \hline
Australian      & 693   & 0.204 & 0.193 & 0.186 & 0.187 & 0.171 & {\bf 0.166} \\ \hline
Diabetes        & 768   & 0.389 & 0.373 & 0.371 & 0.372 & 0.364 & {\bf 0.363} \\ \hline
German          & 1000  & 0.388 & 0.324 & 0.321 & 0.323 & 0.315 & {\bf 0.309} \\ \hline
Splice          & 3175  & 0.410 & 0.349 & 0.335 & 0.338 & 0.301 & {\bf 0.299} \\ \hline
Mushrooms       & 8124  & 0.058 & 0.034 & 0.034 & 0.034 & 0.031 & {\bf 0.030} \\ \hline
Ionosphere      & 351   & 0.297 & 0.247 & 0.240 & 0.241 & 0.240 & {\bf 0.236} \\ \hline
Sonar           & 208   & 0.404 & 0.379 & 0.376 & 0.379 & 0.370 & {\bf 0.369} \\ \hline
SVMguide3       & 1284  & 0.382 & 0.301 & 0.299 & 0.299 & 0.292 & {\bf 0.289} \\
\hline
\end{tabular}
}
\label{tab:P_static}
\end{sc}
\end{small}
\end{center}
\end{table*}
\begin{table*}[p]
\caption{Experiments of online classifier ensemble using pre-trained Na\"{\i}ve Bayes as weak classifiers and keeping them fixed online. Mean error rate over five random trials is shown in the table.
We compare with five baseline methods: a single Na\"{\i}ve Bayes classifier ({\scshape Na\"{\i}ve Bayes}), a uniform ensemble scheme of weak classifiers ({\scshape Voting}), an ensemble scheme using SGD for estimating the ensemble weights ({\scshape SGD}), an ensemble scheme using the Polyak averaging scheme of SGD~\citep{PolyakJuditsky1992} to estimate the ensemble weights ({\scshape SGD-avg}), and an ensemble scheme using the Stochastic Average Gradient~\citep{Schmidt2013} to estimate the ensemble weights ({\scshape SAG}).  Our method attains the top performance for all testing sequences.}
\begin{center}
\begin{small}
\begin{sc}
\scalebox{0.95}{
\begin{tabular}{|l||c|c|c|c|c|c|c|}
\hline
dataset & \# Examples & Na\"{\i}ve Bayes & Voting & SGD & SGD-avg & SAG & Ours \\
\hline \hline
Heart           & 270   & 0.232 & 0.207 & 0.214 & 0.215 & 0.206 & {\bf 0.202} \\ \hline
Breast-Cancer   & 683   & 0.065 & 0.049 & 0.050 & 0.049 & 0.048 & {\bf 0.044} \\ \hline
Australian      & 693   & 0.204 & 0.201 & 0.200 & 0.200 & 0.187 & {\bf 0.184} \\ \hline
Diabetes        & 768   & 0.259 & 0.258 & 0.256 & 0.256 & 0.254 & {\bf 0.253} \\ \hline
German          & 1000  & 0.343 & 0.338 & 0.338 & 0.338 & 0.320 & {\bf 0.315} \\ \hline
Splice          & 3175  & 0.155 & 0.156 & 0.155 & 0.155 & {\bf 0.152} & {\bf 0.152} \\ \hline
Mushrooms       & 8124  & 0.037 & 0.066 & 0.064 & 0.064 & 0.046 & {\bf 0.031} \\ \hline
Ionosphere      & 351   & 0.199 & 0.196 & 0.195 & 0.195 & 0.193 & {\bf 0.192} \\ \hline
Sonar           & 208   & 0.338 & 0.337 & 0.337 & 0.337 & 0.337 & {\bf 0.336} \\ \hline
SVMguide3       & 1284  & 0.315 & 0.316 & 0.304 & 0.316 & 0.236 & {\bf 0.215} \\
\hline
\end{tabular}
}
\label{tab:NB_static}
\end{sc}
\end{small}
\end{center}
\end{table*}
\begin{figure}[p]
\centering
    \begin{tabular}{cc}
    \includegraphics[width=0.475\columnwidth, keepaspectratio]{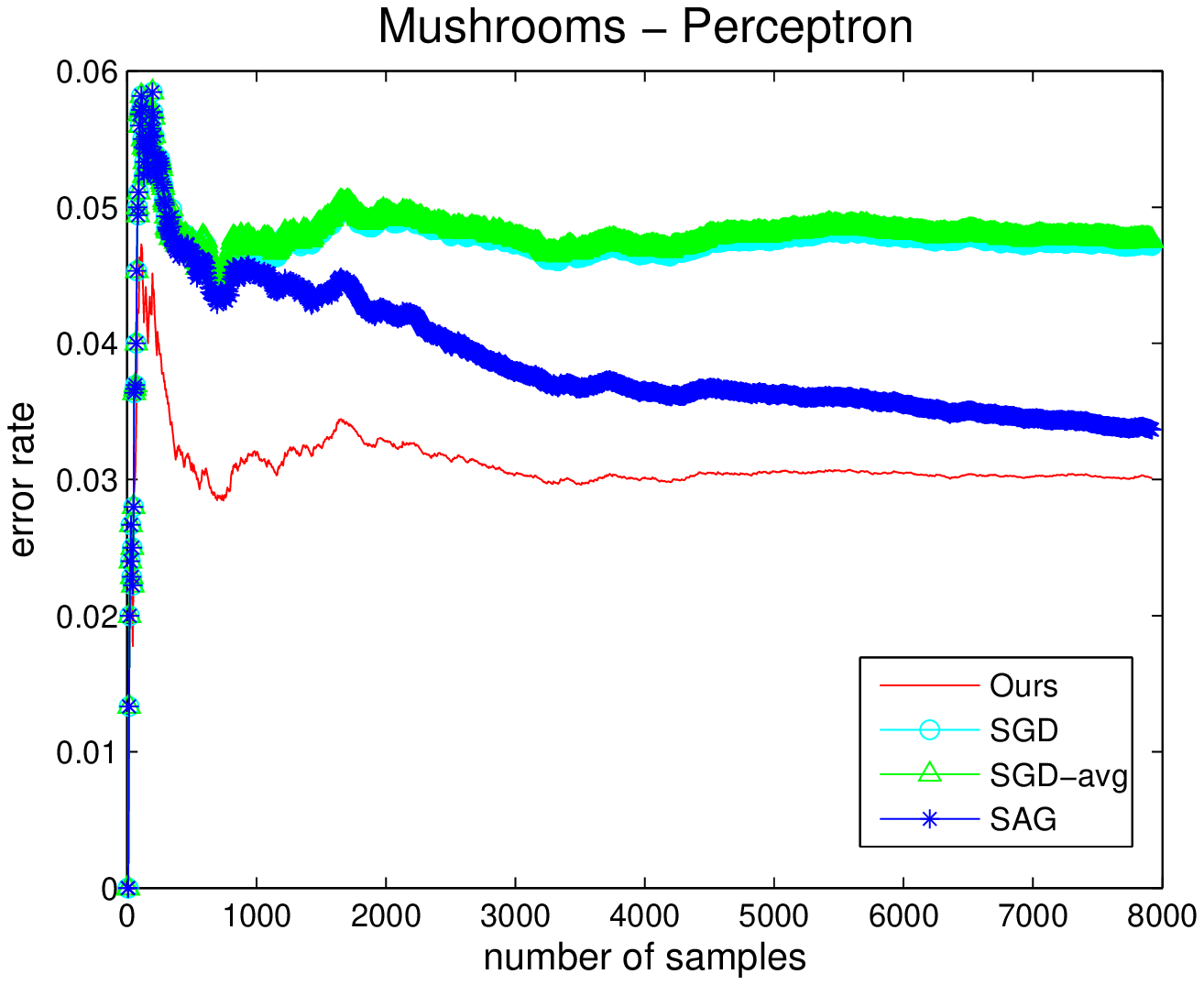} &
    \includegraphics[width=0.475\columnwidth, keepaspectratio]{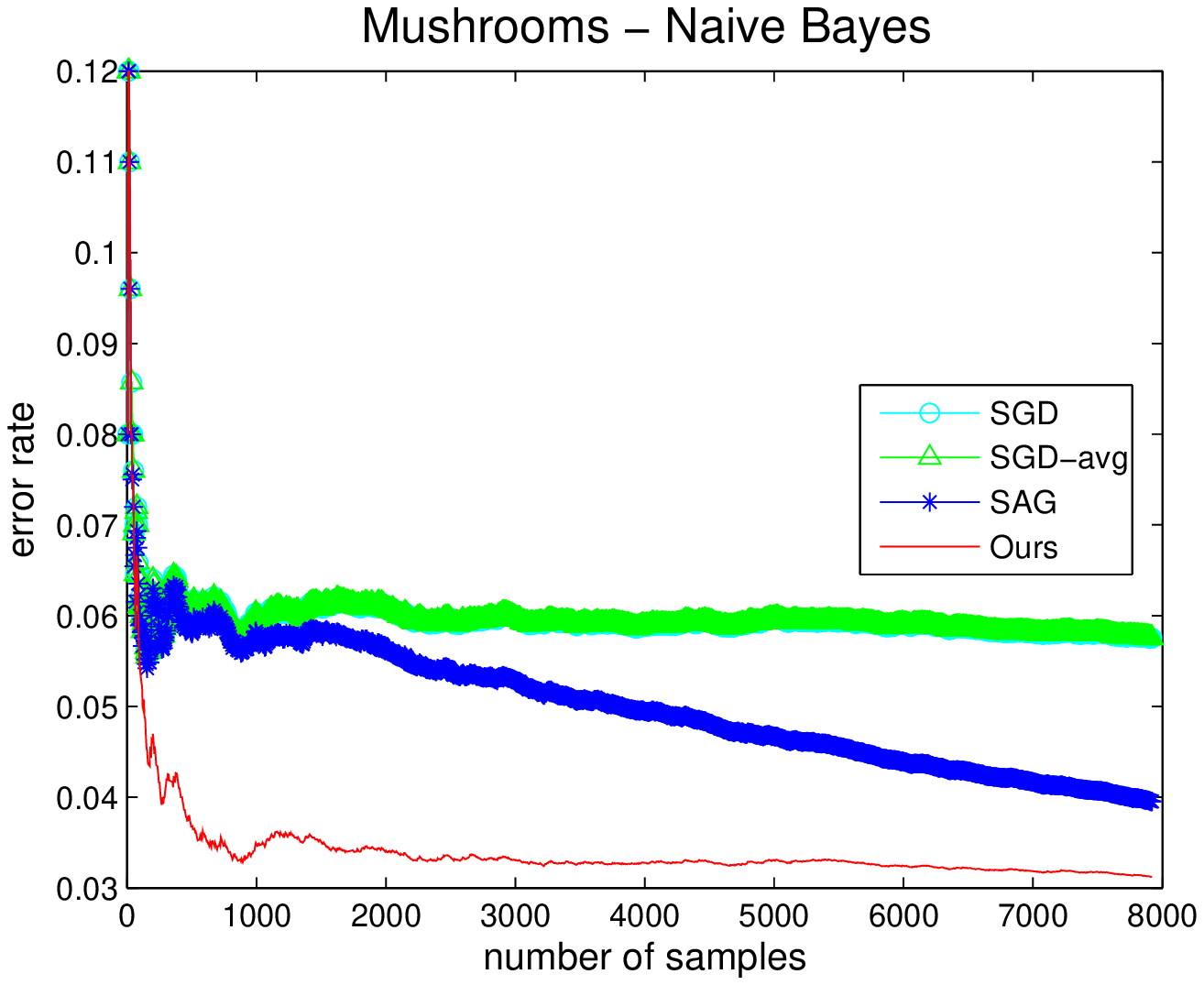}\\
    \includegraphics[width=0.475\columnwidth, keepaspectratio]{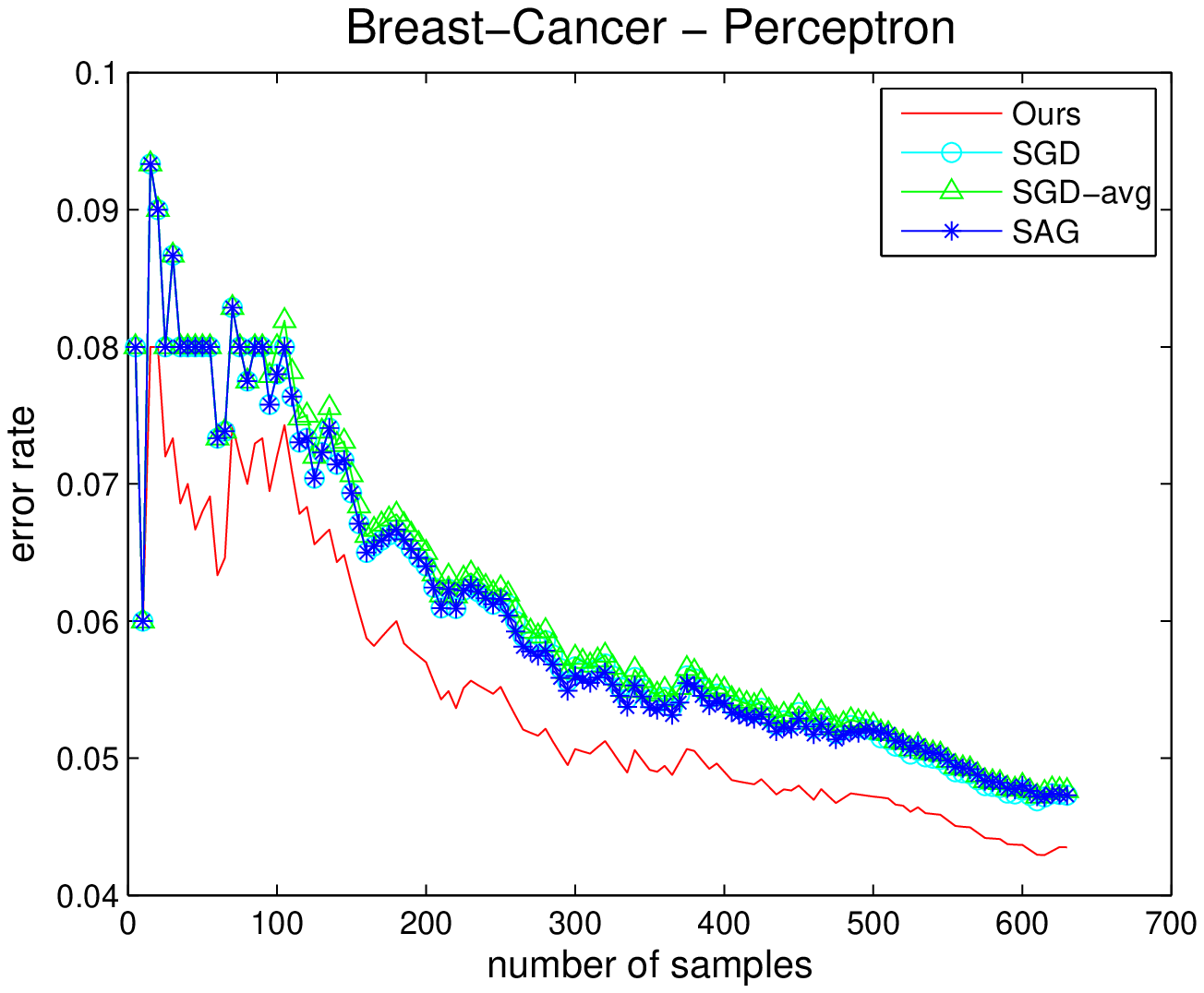} &
    \includegraphics[width=0.475\columnwidth, keepaspectratio]{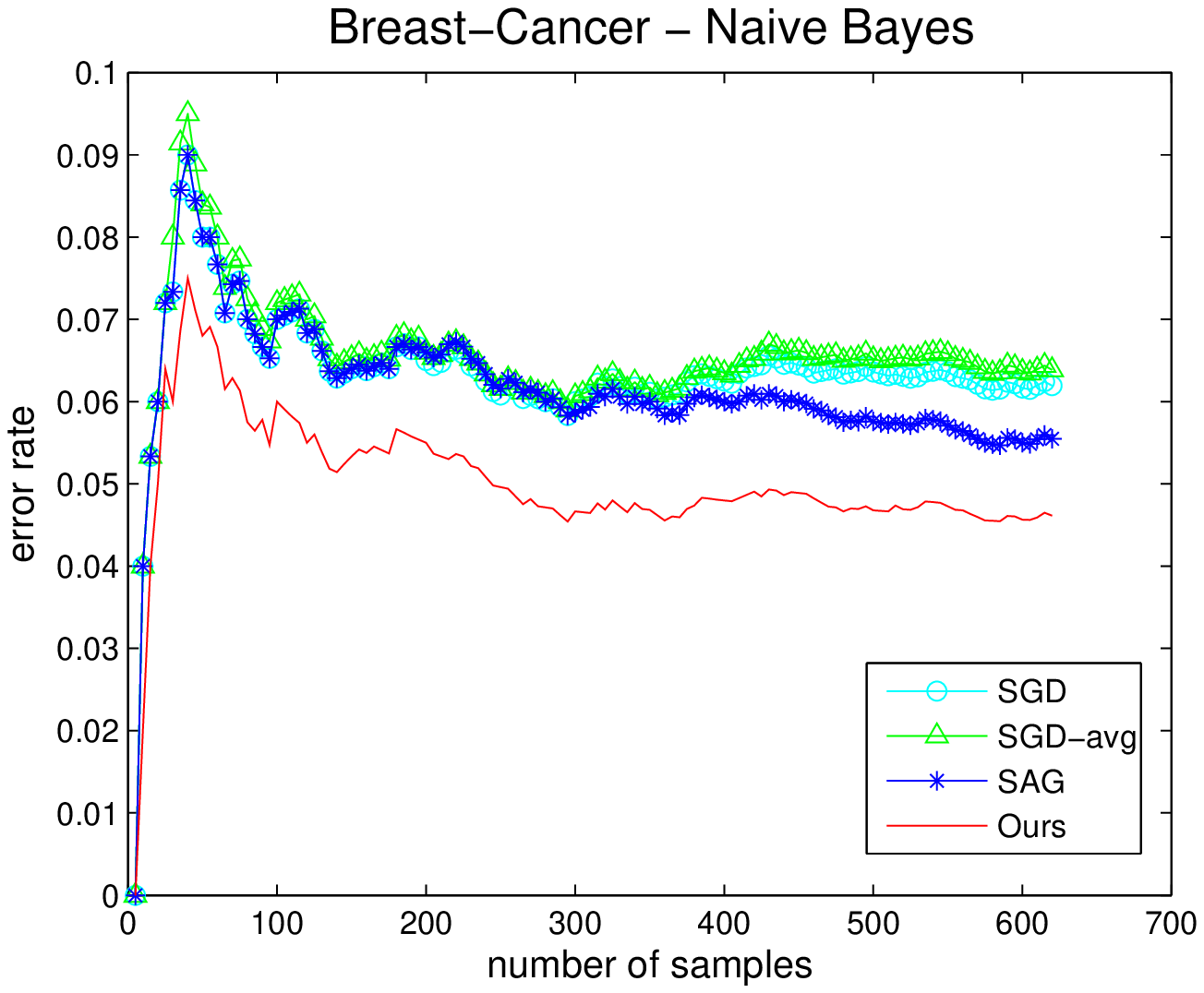}\\
    \includegraphics[width=0.475\columnwidth, keepaspectratio]{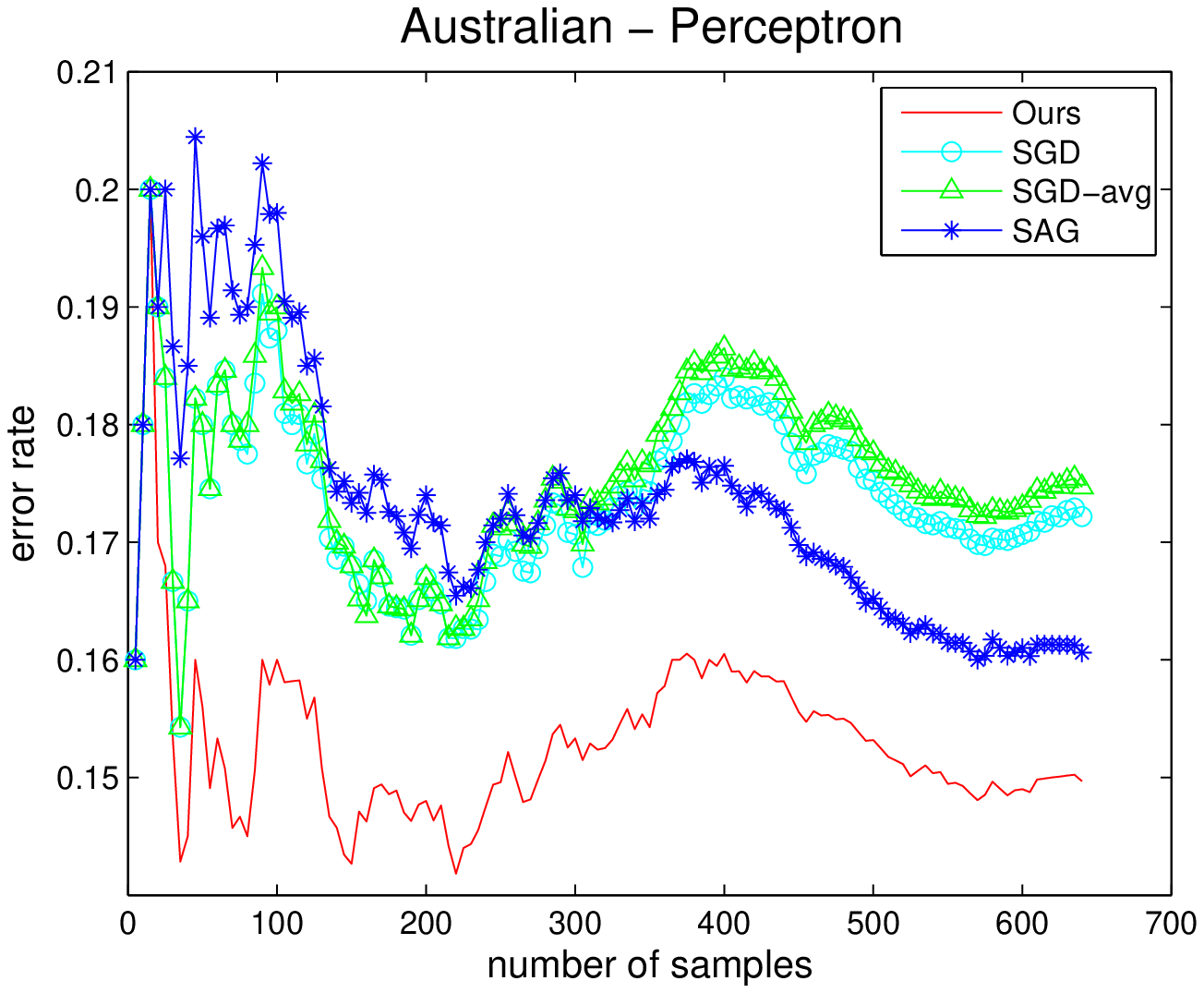} &
    \includegraphics[width=0.475\columnwidth, keepaspectratio]{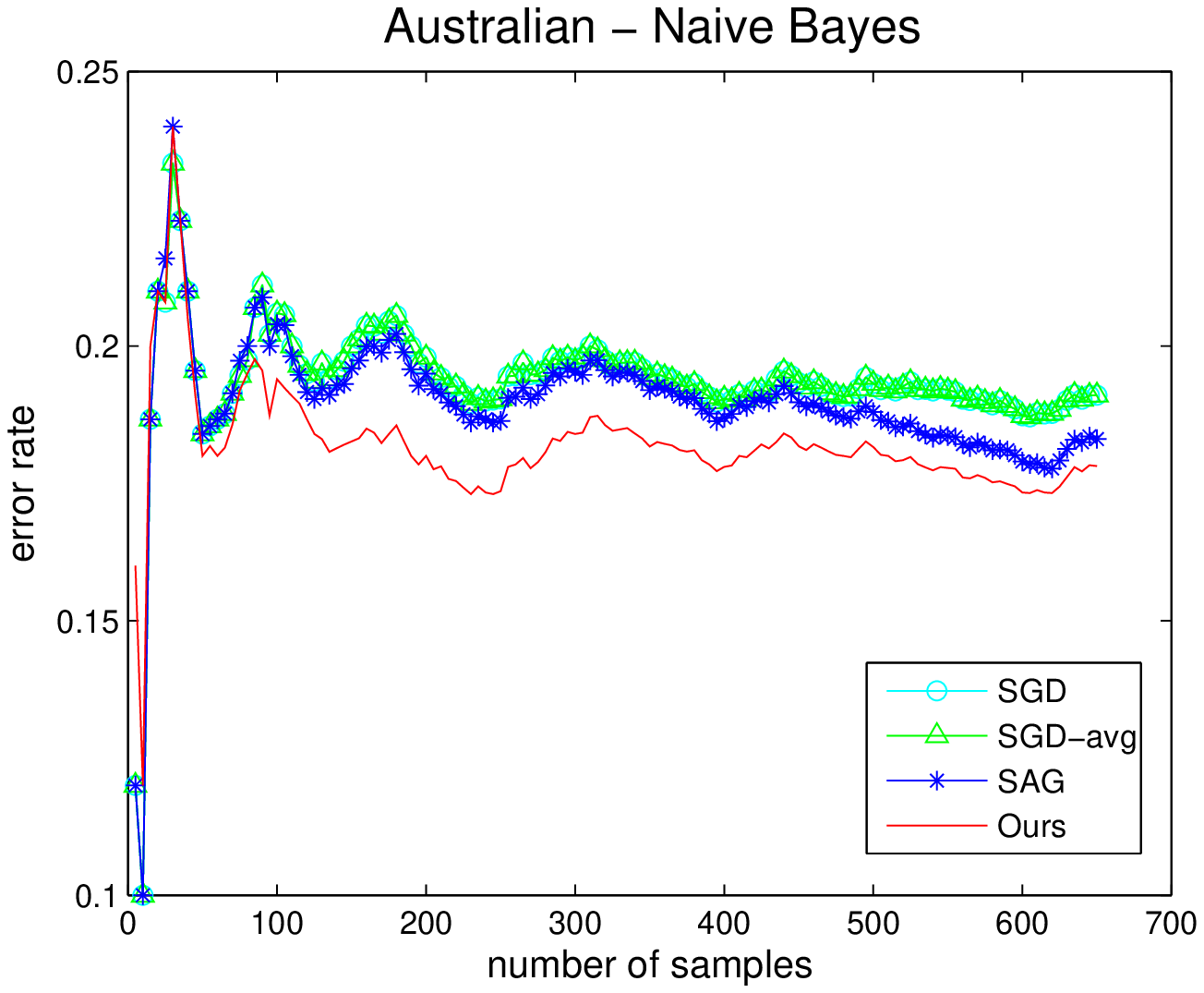}\\
    \end{tabular}
\caption{Plots of the error rate as online learning progresses for three benchmark datasets: Mushrooms, Breast-Cancer, and Australian. (Plots for other benchmarks datasets are provided in the supporting material.) The red curve in each graph shows the error rate for our method, as a function of the number samples processed in the online learning of ensemble weights.  The cyan curves are results from {\scshape SGD} baseline, the green curves are results from the Polyak averaging baseline {\scshape SGD-avg}~\citep{PolyakJuditsky1992}, and the blue curves are results from the Stochastic Average Gradient baseline {\scshape SAG}~\citep{Schmidt2013}.}
\label{fig:error}
\end{figure}

Classification error rates for this experiment are shown in Tables~\ref{tab:P_static} and~\ref{tab:NB_static}.
%
Our proposed method consistently performs the best for all datasets.
Its superior performance against {\scshape Voting} is consistent with the asymptotic convergence analysis in Theorem~\ref{thm:convergence}.
Its superior performance against the {\scshape SGD} baseline is consistent with the convergence rate analysis in Theorem~\ref{thm:rate}.
Polyak averaging ({\scshape SGD-avg}) does not improve the performance of basic SGD in general; this is consistent with the analysis in~\citet{Xu2011} which showed that, despite its optimal asymptotic convergence rate, a huge number of samples may be needed for Polyak averaging to reach its asymptotic region for a randomly chosen step size.
{\scshape SAG}~\citep{Schmidt2013} is a close runner-up to our approach, but it has two limitations:
1) it requires knowing the length of the testing sequence \emph{a priori}, and
2) as noted in~\citet{Schmidt2013}, the step size suggested in the theoretical analysis does not usually give the best result in practice, and thus the authors suggest a larger step size instead. In our experiments, we also found that the improvement of~\citet{Schmidt2013} over the {\scshape SGD} baseline relies on tuning the step size to a value that is greater than that given in the theory. The performance of {\scshape SAG} reported here has taken advantage of these two points.

\begin{figure}[h*]
\centering
    \begin{tabular}{cc}
    \includegraphics[width=0.475\columnwidth, keepaspectratio]{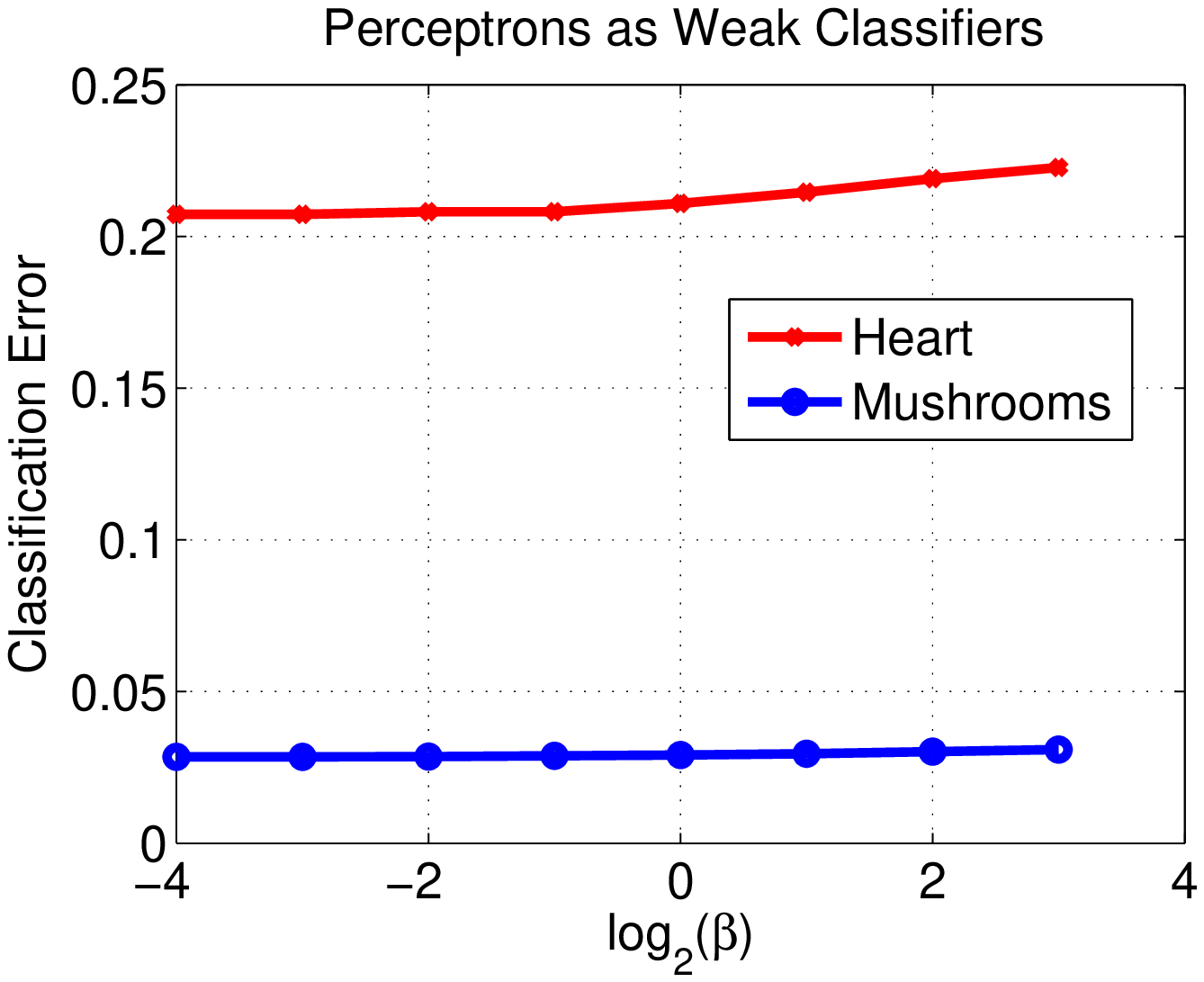} &
    \includegraphics[width=0.475\columnwidth, keepaspectratio]{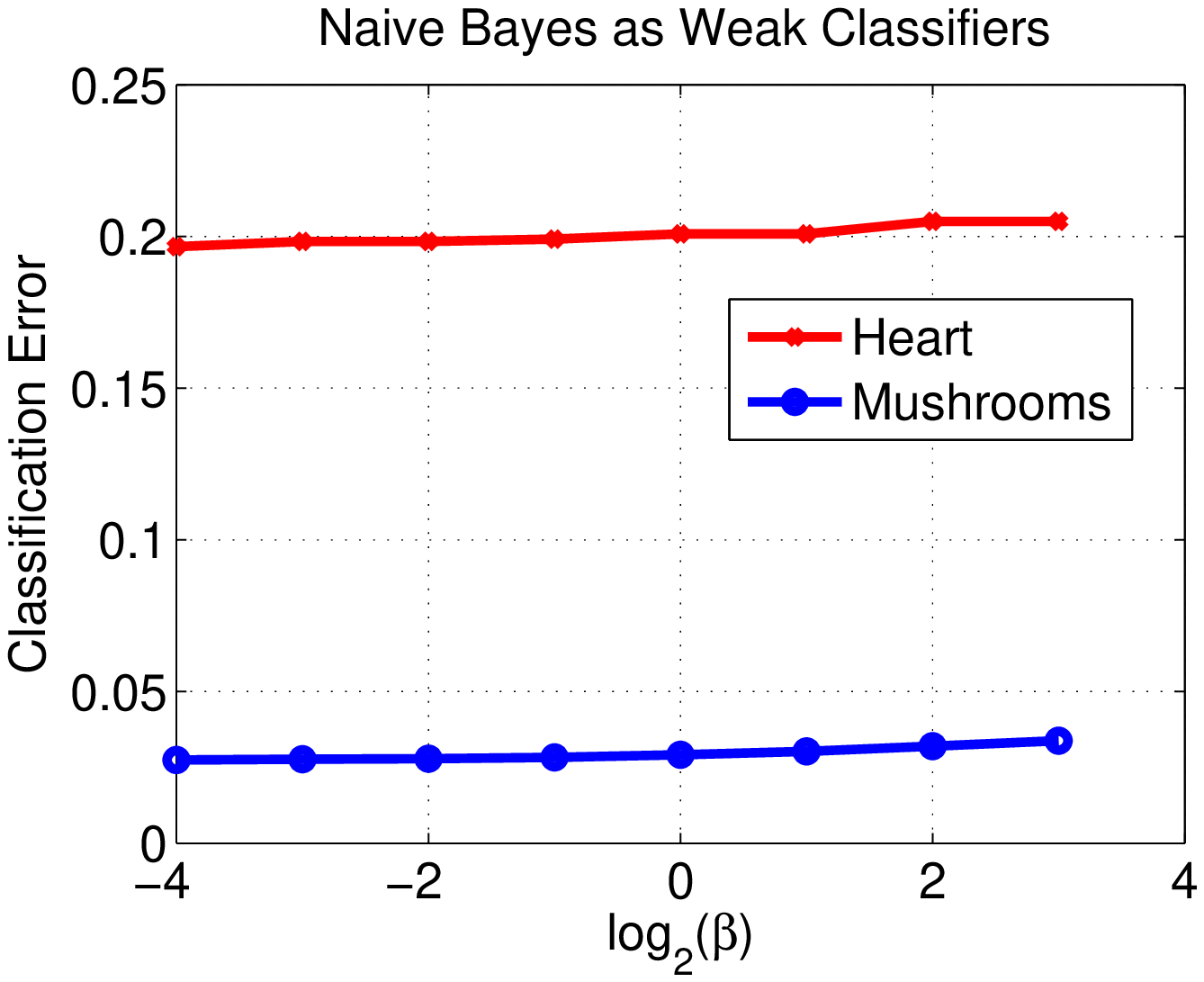}\\
    \end{tabular}
\vspace*{-0.1in}
\caption{Experiments to evaluate different settings of $\beta$ for our online classifier ensemble method,  using pre-trained Perceptrons and Na\"{\i}ve Bayes as weak classifiers. The mean error rate is computed over five random trials for the ``Heart" and ``Mushrooms" datasets. These results are consistent with all other benchmarks tested.}
\label{fig:beta}
\end{figure}
\begin{figure}[h*]
\centering
    \begin{tabular}{cc}
    \includegraphics[width=0.475\columnwidth, keepaspectratio]{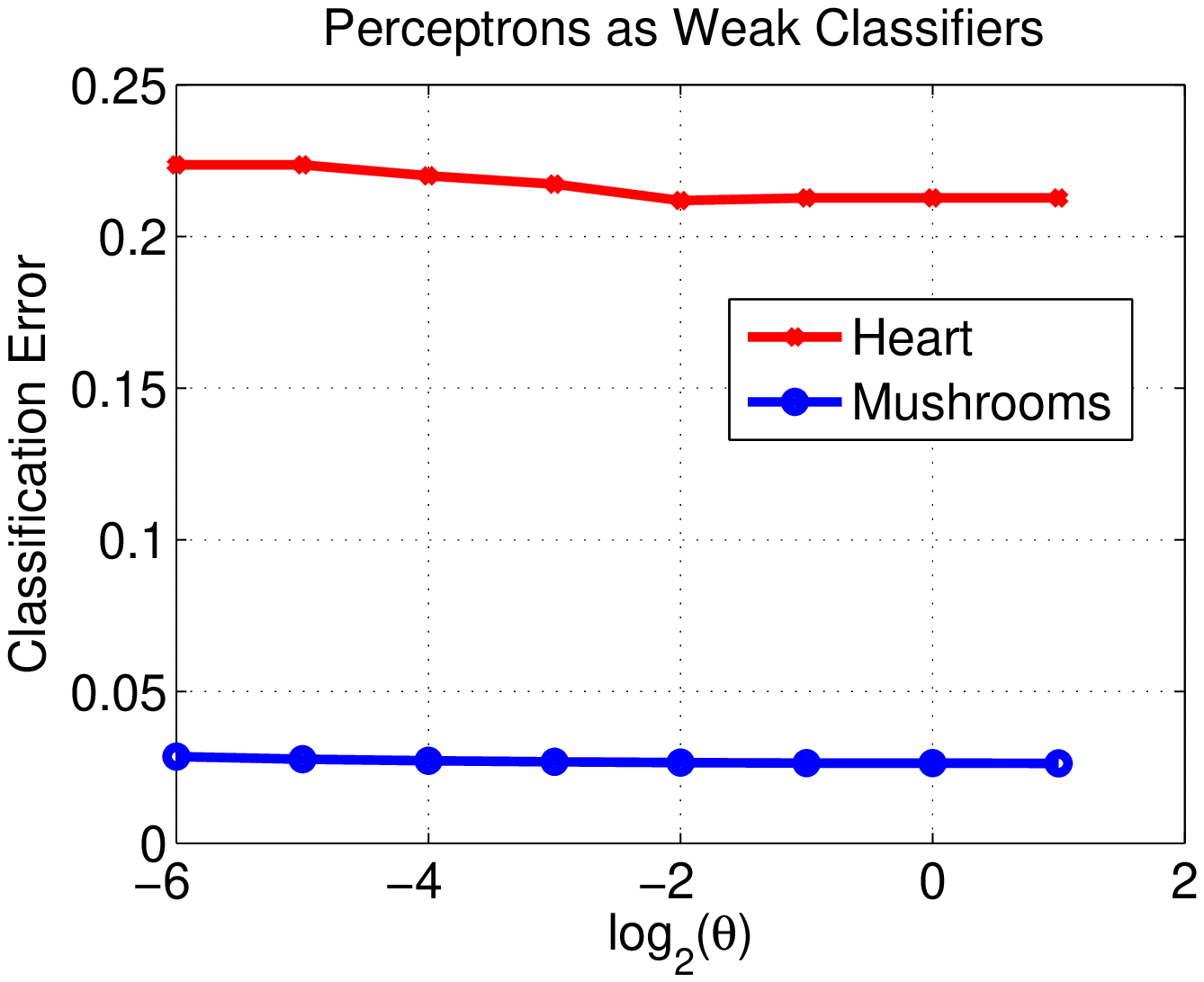} &
    \includegraphics[width=0.475\columnwidth, keepaspectratio]{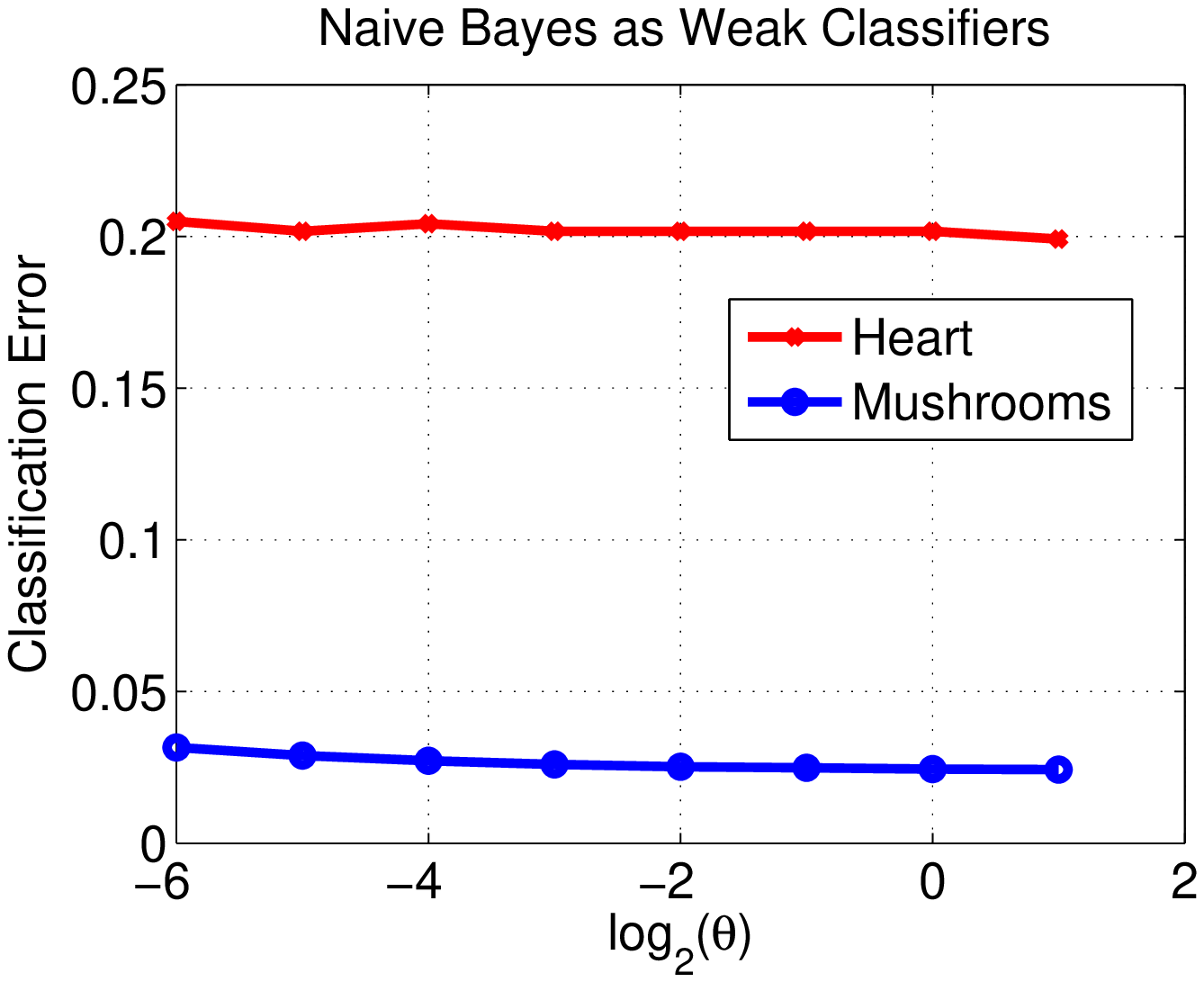}\\
    \end{tabular}
\vspace*{-0.1in}
\caption{Experiments to evaluate different settings of $\theta$ for our online classifier ensemble method,  using pre-trained Perceptrons and Na\"{\i}ve Bayes as weak classifiers. The mean error rate is computed over five random trials for the ``Heart" and ``Mushrooms" datasets. These results are consistent with all other benchmarks tested.}
\label{fig:theta}
\end{figure}
Fig.~\ref{fig:error} shows plots of the convergence of online learning for three of the benchmark datasets.  Plots for the other benchmark datasets are provided in the supplementary material.  Each plot reports the classification error curves of our method, the {\scshape SGD} baseline, Polyak averaging {\scshape SGD-avg}~\citep{PolyakJuditsky1992}, and Stochastic Average Gradient {\scshape SAG}~\citep{Schmidt2013}.   Overall, for all methods, the error rate generally tends to decrease as the online learning process considers more and more samples. As is evident in the graphs, our method tends to attain lowest error rates overall, throughout each training sequence, for the compared methods for these benchmarks.
%
Ideally, as an algorithm converges, the rate of cumulative error should tend to decrease as more samples are processed, approaching the minimal error rate that is achievable for the given set of pre-trained weak classifiers. Yet given the finite size of training sample set, and the randomness caused by different orderings of the sequences, we may not see the ideal monotonic curves. But in general, the trend of curves obtained by our method is consistent with the convergence analysis of Theorem 1.
The online learning algorithm that converges faster should result in curves that go down more quickly in general. Again, given finite samples and different orderings, there is variance, but still, consistent with Theorem 2, the consistently better performance of our formulation vs.\ the compared methods is evident.

Fig~\ref{fig:beta} and Fig.~\ref{fig:theta} show plots for studying the sensitivity of parameter settings of our method. It is clear from the expression of the posterior mean~\eqref{eq:post_mean} that the numerator containing $\alpha$ will be cancelled out in the prediction rule~\eqref{eq:strong}, therefore we just need to study the effect of $\beta$ and $\theta$. We select a short sequence, ``Heart" and a long sequence, ``Mushrooms" as two representative datasets. We plot the classification error rates of our method under different settings of $\beta$ (Fig.~\ref{fig:beta}) and $\theta$ (Fig.~\ref{fig:theta}), averaged over five random trials. It can be observed that the performance of our method is not very sensitive with respect to the changes in the settings of $\beta$ and $\theta$ even for a short sequence like ``Heart" (270 samples). And the performance is more stable to the settings of these parameters for longer sequence like ``Mushrooms" (8124 samples). This observation is consistent with the asymptotic property of our prediction rule~\eqref{eq:strong}.
We observed similar behavior for all the other benchmark datasets we tested.


\subsection{Comparison with Online Boosting Methods}

We further compare our method with a single Perceptron/Na\"{\i}ve Bayes classifier that is updated online, and three representative online boosting methods reported in~\citet{Chen2012}:  {\scshape OzaBoost} is the method proposed by~\citet{OzaRussell2001}, {\scshape OGBoost} is the online GradientBoost method proposed by \citet{Leistner2009}, and {\scshape OSBoost} is the online Smooth-Boost method proposed by~\citet{Chen2012}.  {\scshape Ours-r} is our proposed Bayesian ensemble method for online updated weak classifiers.
All methods are trained and compared following the setup of~\citet{Chen2012}, where for each experimental trial, a set of 100 weak classifiers are initialized and updated online.

We use ten binary classification benchmark datasets that are also used by~\citet{Chen2012}. We discard the ``Ijcnn1" and ``Web Page" datasets from the tables of~\citet{Chen2012}, because they are highly biased with portions of positive samples around $0.09$ and $0.03$ respectively, and even a na\"{\i}ve ``always negative" classifier attains comparably top performance.

\begin{table*}[p]
\caption{Experiments of online classifier ensemble using online Perceptrons as weak classifiers that are updated online. Mean error rate over five trials is shown in the table. We compare with a single online Perceptron classifier ({\scshape Perceptron}) and three representative online boosting methods reported in~\citet{Chen2012}. {\scshape OzaBoost} is the method proposed by~\citet{OzaRussell2001}, {\scshape OGBoost} is the online GradientBoost method proposed by \citet{Leistner2009}, and {\scshape OSBoost} is the online Smooth-Boost method proposed by~\citet{Chen2012}. Our method ({\scshape Ours-R}) attains the top performance for most of the testing sequences.}
\begin{center}
\begin{small}
\begin{sc}
\scalebox{0.95}{
\begin{tabular}{|l|c|c|c|c|c|c|}
\hline
dataset & \# examples & Perceptron & OzaBoost & OGBoost & OSBoost & Ours-r \\
\hline \hline
Heart           & 270   & 0.2489 & 0.2356 & 0.2267 & 0.2356 & {\bf 0.2134} \\ \hline
Breast-Cancer   & 683   & 0.0592 & 0.0501 & 0.0445 & 0.0466 & {\bf 0.0419} \\ \hline
Australian      & 693   & 0.2099 & 0.2012 & 0.1962 & 0.1872 & {\bf 0.1655} \\ \hline
Diabetes        & 768   & 0.3216 & 0.3169 & 0.3313 & 0.3185 & {\bf 0.3098} \\ \hline
German          & 1000  & 0.3256 & 0.3364 & 0.3142 & 0.3148 & {\bf 0.3105} \\ \hline
Splice          & 3175  & 0.2717 & 0.2759 & 0.2625 & 0.2605 & {\bf 0.2584} \\ \hline
Mushrooms       & 8124  & 0.0148 & 0.0080 & 0.0068 & {\bf 0.0060} & 0.0062 \\ \hline
Adult           & 48842 & 0.2093 & 0.2045 & 0.2080 & 0.1994 & {\bf 0.1682} \\ \hline
Cod-RNA         & 488565& 0.2096 & 0.2170 & 0.2241 & 0.2075 & {\bf 0.1934} \\ \hline
Covertype       & 581012& 0.3437 & 0.3449 & 0.3482 & 0.3334 & {\bf 0.3115} \\
\hline
\end{tabular}
}
\label{tab:P_online}
\end{sc}
\end{small}
\end{center}
\end{table*}
\begin{table*}[p]
\caption{Experiments of online classifier ensemble using online Na\"{\i}ve Bayes as weak classifiers that are updated online. Mean error rate over five trials is shown in the table. We compare with a single online Na\"{\i}ve Bayes classifier ({\scshape Na\"{\i}ve Bayes}) and three representative online boosting methods reported in~\citet{Chen2012}. {\scshape OzaBoost} is the method proposed by~\citet{OzaRussell2001}, {\scshape OGBoost} is the online GradientBoost method proposed by \citet{Leistner2009}, and {\scshape OSBoost} is the online Smooth-Boost method proposed by~\citet{Chen2012}. Our method ({\scshape Ours-R}) attains the top performance for 7 out of 10 testing sequences. For ``Cod-RNA" our implementation of the Na\"{\i}ve Bayes baseline was unable to duplicate the reported result; ours gave 0.2555 instead.}
%
\begin{center}
\begin{small}
\begin{sc}
\scalebox{0.95}{
\begin{tabular}{|l|c|c|c|c|c|c|}
\hline
dataset & \# examples & Naive Bayes & OzaBoost & OGBoost & OSBoost & Ours-r \\
\hline \hline
Heart           & 270   & 0.1904 & 0.2570 & 0.3037 & 0.2059 & {\bf 0.1755} \\ \hline
Breast-Cancer   & 683   & 0.0474 & 0.0635 & 0.1004 & 0.0489 & {\bf 0.0408} \\ \hline
Australian      & 693   & 0.1751 & 0.2133 & 0.2826 & 0.1849 & {\bf 0.1611} \\ \hline
Diabetes        & 768   & 0.2664 & 0.3091 & 0.3292 & 0.2622 & {\bf 0.2467} \\ \hline
German          & 1000  & 0.2988 & 0.3206 & 0.3598 & 0.2730 & {\bf 0.2667} \\ \hline
Splice          & 3175  & 0.2520 & 0.1563 & 0.1863 & 0.1370 & {\bf 0.1344} \\ \hline
Mushrooms       & 8124  & 0.0076 & 0.0049 & 0.0229 & {\bf 0.0029} & 0.0054 \\ \hline
Adult           & 48842 & 0.2001 & 0.1912 & 0.1878 & {\bf 0.1581} & 0.1658 \\ \hline
Cod-RNA         & 488565& 0.2206$*$ & 0.0796 & 0.0568 & {\bf 0.0581} & 0.2552 \\ \hline
Covertype       & 581012& 0.3518 & 0.3293 & 0.3732 & 0.3634 & {\bf 0.3269} \\
\hline
\end{tabular}
}
\label{tab:NB_online}
\end{sc}\\
\end{small}
\end{center}
\end{table*}

The error rates for this experiment are shown in Tables~\ref{tab:P_online} and~\ref{tab:NB_online}.  As can be seen, our method outperforms competing methods using the Perceptron weak classifier in nearly all the benchmarks tested.  Moreover, our method performs among the best for the Na\"{\i}ve Bayes weak classifier.  It is worth noting that our method is the only one that outperforms the single classifier baseline in all benchmark datasets, which further confirms the effectiveness of the proposed ensemble scheme.

We also note that despite our best efforts to align both the weak classifier construction and experimental setup with competing methods~\citep{Chen2012,Chen2013}, there are inevitably differences in weak classifier construction. Firstly, given that our method only focuses on optimizing the ensemble weights, each incoming sample is treated equally in the update of all weak classifiers, while all three online boosting methods adopt more sophisticated weighted update schemes for the weak classifiers, where the sample weight is dynamically adjusted during each round of update. Secondly, in order to make weak classifiers different from each other, our weak classifiers use only a subset of input features, while weak classifiers of competing methods use all features and are updated differently.  As a result, the weak classifiers used by our method are actually weaker than in competing methods. Nevertheless, our method often compares favorably.


\section{Additional Loss Functions for Online Ensemble Learning}\label{sec:ext}
We discuss other loss functions that fit into our Bayesian online ensemble learning framework. Note that the loss function~\eqref{loss} given in Section~\ref{sec:examples} is very simple, to the extent that the surrogate empirical loss~\eqref{eq:cumulated} at each step can be directly minimized in closed-form. 
To demonstrate the flexibility of our framework, the empirical losses in the two examples we give below cannot be minimized directly, but they are still effectively solvable using our approach.
\begin{enumerate}
\item Consider the loss function
\begin{eqnarray}
\ell(\bm\lambda;\mathbf g)&=&\sum\limits_{i=1}^m(1-\lambda_i)\log g_i+\theta\sum\limits_{i=1}^m g_i\nonumber\\
&+&\sum\limits_{i=1}^m\log\Gamma(\lambda_i)-(\log\theta)\sum\limits_{i=1}^m\lambda_i
\label{eq:loss1}
\end{eqnarray}
where $\theta>0$ is a fixed parameter. The corresponding likelihood is given by the following product of Gamma distributions
\begin{equation}
p(\mathbf g|\bm\lambda)=\prod_{i=1}^m\frac{\theta^{\lambda_i}}{\Gamma(\lambda_i)}g_i^{\lambda_i-1}e^{-\theta g_i}
\label{eq:likelihood_loss1}
\end{equation}
A conjugate prior for $\bm\lambda$ is available, in the form
$$p(\bm\lambda)\propto\prod_{i=1}^m\frac{a^{\lambda_i-1}\theta^{c\lambda_i}}{\Gamma(\lambda_i)^b}$$
where $a,b,c>0$ are hyperparameters. The posterior distribution of $\bm\lambda$ after $t$ steps is given by the Gamma distribution
\begin{equation}
p(\bm\lambda|\mathbf{g}^{1:t})\propto
\prod_{i=1}^m\frac{(a\prod\limits_{s=1}^tg_i^{s})^{\lambda_i-1}\theta^{(c+t)\lambda_i}}{\Gamma(\lambda_i)^{(b+t)}}
\label{eq:post_loss1}
\end{equation}
Note that given posterior~\eqref{eq:post_loss1}, the posterior mean for each $\lambda_i$ is not available in closed-form, but it can be computed using standard numerical integration procedures, such as those provided in the Matlab Mathematics Toolbox (it only involves one-dimensional procedures because of the independence among the $\bm\lambda$). The corresponding prediction rule at each step is given by
\begin{equation*}
y=
	\left\{
                    \begin{array}{rcl}
			1 \ & \text{if}\ \sum\limits_{i=1}^m(1-\lambda_i)\log \frac{g_i(\mathbf{x},1)}{g_i(\mathbf{x},-1)} + \theta\sum\limits_{i=1}^m (g_i(\mathbf{x},1)-g_i(\mathbf{x},-1))\leq 0\\
			-1 \ & \text{otherwise}
		\end{array}
	\right.\\
\label{eq:strong_loss1}
\end{equation*}
Note that the likelihood function~\eqref{eq:likelihood_loss1} of $g$ is a Gamma distribution, which has support $(0,\infty)$. For computational convenience, instead of choosing the ramp loss for $g$ as in Section~\ref{sec:examples}, we can choose $g$ to be the logistic function.

\item
We can extend the ensemble weights to include two correlated parameters for each weight, i.e., $\lambda_i=(\alpha_i,\beta_i)$. In this case, we may define the loss function as
\begin{eqnarray}
\ell(\bm\alpha,\bm\beta;\mathbf g)&=&\sum\limits_{i=1}^m\beta_i g_i+\sum\limits_{i=1}^m(1-\alpha_i)\log g_i\nonumber\\
&+&\sum\limits_{i=1}^m\log\Gamma(\alpha_i)-\sum\limits_{i=1}^m\alpha_i\log\beta_i
\label{eq:loss2}
\end{eqnarray}
with the corresponding Gamma likelihood
\begin{equation}
p(\mathbf g|\bm\alpha,\bm\beta)=\prod_{i=1}^m\frac{\beta_i^{\alpha_i}}{\Gamma(\alpha_i)}g^{\alpha_i-1}e^{-\beta_i g_i}
\label{eq:likelihood_loss2}
\end{equation}
A conjugate prior is available for $\bm\alpha$ and $\bm\beta$ jointly
$$p(\bm\alpha,\bm\beta)\propto\prod_{i=1}^m\frac{p^{\alpha_i-1}e^{-q\beta_i }}{\Gamma(\alpha_i)^r\beta_i^{-\alpha_i s}}$$
where $p,q,r,s$ are hyperparameters. The posterior distribution of $\bm\alpha$ and $\bm\beta$ after $t$ steps is given by the Gamma distribution
\begin{equation}
p(\bm\alpha,\bm\beta|\mathbf g^{1:t})\propto\prod_{i=1}^m\frac{(p\prod\limits_{s=1}^tg_i^{s})^{\alpha_i-1}e^ {-(q+\sum_{s=1}^tg_i^{s})\beta_i}}{\Gamma(\alpha_i)^{(r+t)}\beta_i^{-\alpha_i(s+t)}}
\label{eq:post_loss2}
\end{equation}
Again, the posterior mean for~\eqref{eq:post_loss2} is not available in closed-form and we can approximate it using numerical methods. The corresponding prediction rule at each step is given by
\begin{equation*}
y=
	\left\{
                    \begin{array}{rcl}
			1 \ & \mbox{if}\ \sum\limits_{i=1}^m(1-\alpha_i)\log \frac{g_i(\mathbf{x},1)}{g_i(\mathbf{x},-1)} + \sum\limits_{i=1}^m\beta_i (g_i(\mathbf{x},1)-g_i(\mathbf{x},-1)) \leq 0\\
			-1 \ & \mbox{otherwise}
		\end{array}
	\right.\\
\label{eq:strong_loss2}
\end{equation*}
\end{enumerate}

Note that both of these two loss functions satisfy Assumption \ref{regularity}. Similar as the example proposed in Section~\ref{sec:examples}, the Hessian of $L_T$ turns out to not depend on $g^{1:T}$, therefore all conditions of Assumption~\ref{regularity} can be verified easily. 
As a result, applying Algorithm~\ref{alg:framework} on these two loss functions for solving the online ensemble learning problem also possesses the convergence properties given by Theorems~\ref{thm:convergence} and~\ref{thm:rate_general}.

We follow the experimental setup of Section~\ref{sec:exp_baseline} to compare our proposed loss~\eqref{loss} with the additional losses~\eqref{eq:loss1} and~\eqref{eq:loss2} discussed here, using pre-trained Perceptron and Na\"{i}ve Bayes as weak classifiers.
The loss function $g$ for weak classifier $c$ is chosen as a logistic function of $y\cdot c(x)$. According to the posterior update rules given in~\eqref{eq:post_loss1} and~\eqref{eq:post_loss2}, hyper parameters $b,c$ and $r,s$ will keep increasing as online learning proceeds. However, we observe in practice that the numerical integration
of posterior means based on posterior distributions~\eqref{eq:post_loss1} and~\eqref{eq:post_loss2} will not converge if the values of hyper parameters $b,c,r,s$ are too large. In our experiments, we set upper bounds for these parameters. In particular, we set the upper bound for $b$ and $c$ as $1000$, the upper bound for $r$ and $s$ as $200.5$ and $200$ respectively (Since $s$ should be strictly less than $r$, we use the following initialization: $s=1$, $r=1.5$, as suggested by~\citealp{Fink1997}).

Averaged classification error rate over five trials for this experiment is shown in Table~\ref{tab:loss1}. Note that the result in this table should not be directly compared with those reported in Tables~\ref{tab:P_static} and~\ref{tab:NB_static}, given the loss function $g$ for weak classifiers is chosen differently.
We observe that loss~\eqref{eq:loss2} works slightly better than loss~\eqref{eq:loss1}, which is reasonable given more parameters in the formula of~\eqref{eq:loss2}. This advantage also leads to a superior performance to loss~\eqref{loss} proposed in Section 4 for shorter sequences, such as ``Heart", ``Ionosphere" and ``Sonar". However, for longer sequences, loss~\eqref{loss} still has some advantage because of the closed-form posterior mean.


\begin{table*}[htb]
\caption{Experiments of online classifier ensemble using pre-trained Perceptrons/Na\"{\i}ve Bayes as weak classifiers and keeping them fixed online. Mean error rate over five random trials is shown in the table.
We compare our method using the proposed loss function~\eqref{loss} with alternative losses defined by~\eqref{eq:loss1} and~\eqref{eq:loss2}. In general, the loss function~\eqref{loss} that enables closed-form posterior mean performs the best.}
\vskip -0.1in
\begin{center}
\begin{small}
\begin{sc}
\scalebox{0.95}{
\begin{tabular}{|l||c|c|c|c||c|c|c|}
        \hline
         & & \multicolumn{3}{c||}{Perceptron weak learner} & \multicolumn{3}{c|}{Na\"{i}ve Bayes weak learner} \\
        \cline{1-8}
        dataset &  \# examples & loss~\eqref{loss} & loss~\eqref{eq:loss1} & loss~\eqref{eq:loss2} & loss~\eqref{loss} & loss~\eqref{eq:loss1} & loss~\eqref{eq:loss2} \\
        \hline \hline
        Heart           & 270   & 0.203 & 0.208 & {\bf 0.198} & 0.197 & 0.204 & {\bf 0.196} \\ \hline
        Breast-Cancer   & 683   & {\bf 0.065} & 0.070 & 0.068 & {\bf 0.045} & 0.050 & 0.046 \\ \hline
        Australian      & 693   & {\bf 0.183} & 0.207 & 0.200 & {\bf 0.191} & 0.209 & 0.203 \\ \hline
        Diabetes        & 768   & 0.301 & 0.307 & {\bf 0.300} & 0.285 & 0.287 & {\bf 0.284} \\ \hline
        German          & 1000  & {\bf 0.338} & 0.347 & 0.348 & {\bf 0.292} & {\bf 0.292} & 0.293 \\ \hline
        Splice          & 3175  & {\bf 0.390} & 0.418 & 0.418 & {\bf 0.144} & 0.150 & 0.150 \\ \hline
        Mushrooms       & 8124  & {\bf 0.028} & 0.032 & 0.031 & {\bf 0.025} & 0.047 & 0.046 \\ \hline
        Ionosphere      & 351   & 0.293 & 0.295 & {\bf 0.259} & {\bf 0.171} & 0.172 & {\bf 0.171} \\ \hline
        Sonar           & 208   & 0.385 & 0.391 & {\bf 0.380} & {\bf 0.301} & 0.302 & 0.303 \\ \hline
        SVMguide3       & 1284  & {\bf 0.265} & 0.278 & 0.276 & {\bf 0.222} & 0.226 & 0.225\\ \hline
        \end{tabular}
        }
\label{tab:loss1}
\end{sc}
\end{small}
\end{center}
\vskip -0.1in
\end{table*}

\section{Conclusion}
\label{sec:future}
We proposed a Bayesian approach for online estimation of the weights of a classifier ensemble. This approach was based on an empirical risk minimization property of the posterior distribution, and involved suitably choosing the likelihood function based on a user-defined choice of loss function. We developed the theoretical foundation, and identified the class of loss functions, for which the update sequence generated by our approach converged to the stationary risk minimizer. We demonstrated that, unlike standard SGD, the convergence guarantee was global and that the rate was optimal in a well-defined asymptotic sense. Moreover, experiments on real-world datasets demonstrated that our approach compared favorably to state-of-the-art SGD methods and online boosting methods.
In future work, we will study further generalization of the scope of loss functions, and the extension of our framework to non-stationary environments.

\bibliography{jmlr}

\begin{thebibliography}{40}
\providecommand{\natexlab}[1]{#1}
\providecommand{\url}[1]{\texttt{#1}}
\expandafter\ifx\csname urlstyle\endcsname\relax
  \providecommand{\doi}[1]{doi: #1}\else
  \providecommand{\doi}{doi: \begingroup \urlstyle{rm}\Url}\fi

\bibitem[Asmussen and Glynn(2007)]{asmussen2007stochastic}
S.~Asmussen and P.~W. Glynn.
\newblock \emph{Stochastic simulation: Algorithms and analysis}.
\newblock Springer, 2007.

\bibitem[Babenko et~al.(2009{\natexlab{a}})Babenko, Yang, and
  Belongie]{Babenko2009a}
B.~Babenko, M.~H. Yang, and S.~Belongie.
\newblock Visual tracking with online multiple instance learning.
\newblock In \emph{Proc.\ IEEE Conf.\ on Computer Vision and Pattern
  Recognition (CVPR)}, pages 983--990, 2009{\natexlab{a}}.

\bibitem[Babenko et~al.(2009{\natexlab{b}})Babenko, Yang, and
  Belongie]{Babenko2009b}
B.~Babenko, M.~H. Yang, and S.~Belongie.
\newblock A family of online boosting algorithms.
\newblock In \emph{ICCV Workshops}, pages 1346--1353, 2009{\natexlab{b}}.

\bibitem[Bai et~al.(2014)Bai, Lam, and Sclaroff]{Bai2014}
Qinxun Bai, Henry Lam, and Stan Sclaroff.
\newblock A bayesian framework for online classifier ensemble.
\newblock In \emph{Proc.\ International Conf.\ on Machine Learning (ICML)},
  2014.

\bibitem[Cesa-Bianchi and Lugosi(2003)]{Cesa2003}
N.~Cesa-Bianchi and G.~Lugosi.
\newblock Potential-based algorithms in on-line prediction and game theory.
\newblock \emph{Machine Learning}, pages 239--261, 2003.

\bibitem[Chen(1985)]{Chen1985}
C.~F. Chen.
\newblock On asymptotic normality of limiting density functions with bayesian
  implications.
\newblock \emph{Journal of the Royal Statistical Society}, pages 540--546,
  1985.

\bibitem[Chen(2013)]{Chen2013}
S.~T. Chen.
\newblock personal communication, 2013.

\bibitem[Chen et~al.(2012)Chen, Lin, and Lu]{Chen2012}
S.~T. Chen, H.~T. Lin, and C.~J. Lu.
\newblock An online boosting algorithm with theoretical justifications.
\newblock In \emph{Proc.\ International Conf.\ on Machine Learning (ICML)},
  pages 1007--1014, 2012.

\bibitem[Durrett(2010)]{durrett10}
R.~Durrett.
\newblock \emph{Probability Theory and Examples}.
\newblock Cambridge Series in Statistical and Probabilistic Mathematics, 4th
  edition, 2010.

\bibitem[Fink(1997)]{Fink1997}
Daniel Fink.
\newblock A compendium of conjugate priors.
\newblock 1997.

\bibitem[Freund and Schapire(1995)]{FreundSchapire1995}
Y.~Freund and R.~E. Schapire.
\newblock A desicion-theoretic generalization of on-line learning and an
  application to boosting.
\newblock In \emph{Computational learning theory}, pages 23--37, 1995.

\bibitem[Friedman(2001)]{Friedman2001}
J.~H. Friedman.
\newblock Greedy function approximation: a gradient boosting machine.
\newblock \emph{Annals of Statistics}, pages 1189--1232, 2001.

\bibitem[Grabner and Bischof(2006)]{Grabner2006}
H.~Grabner and H.~Bischof.
\newblock On-line boosting and vision.
\newblock In \emph{Proc.\ IEEE Conf.\ on Computer Vision and Pattern
  Recognition (CVPR)}, pages 260--267, 2006.

\bibitem[Grabner et~al.(2008)Grabner, Leistner, and Bischof]{Grabner2008}
H.~Grabner, C.~Leistner, and H.~Bischof.
\newblock Semi-supervised on-line boosting for robust tracking.
\newblock In \emph{Proc.\ European Conf.\ on Computer Vision (ECCV)}, pages
  234--247. 2008.

\bibitem[Grbovic and Vucetic(2011)]{Grbovic2011}
M.~Grbovic and S.~Vucetic.
\newblock Tracking concept change with incremental boosting by minimization of
  the evolving exponential loss.
\newblock In \emph{Machine Learning and Knowledge Discovery in Databases},
  pages 516--532. 2011.

\bibitem[Hardy et~al.(1952)Hardy, Littlewood, and P{\'o}lya]{Hardy1952}
G.~H. Hardy, J.~E. Littlewood, and G.~P{\'o}lya.
\newblock \emph{Inequalities}.
\newblock Cambridge university press, 1952.

\bibitem[Hoeting et~al.(1999)Hoeting, Madigan, Raftery, and
  Volinsky]{Hoeting1999}
J.~A. Hoeting, D.~Madigan, A.~E. Raftery, and C.~T. Volinsky.
\newblock Bayesian model averaging: a tutorial.
\newblock \emph{Statistical science}, pages 382--401, 1999.

\bibitem[Hu et~al.(2007)Hu, Fu, and Marcus]{Hu2007}
Jiaqiao Hu, Michael~C Fu, and Steven~I Marcus.
\newblock A model reference adaptive search method for global optimization.
\newblock \emph{Operations Research}, 55\penalty0 (3):\penalty0 549--568, 2007.

\bibitem[Kolter and Maloof(2007)]{KolterMaloof2007}
J.~Z. Kolter and M.~A. Maloof.
\newblock Dynamic weighted majority: An ensemble method for drifting concepts.
\newblock \emph{Journal of Machine Learning Research}, pages 2755--2790, 2007.

\bibitem[Kolter and Maloof(2005)]{KolterMaloof2005}
J.Z. Kolter and M.A. Maloof.
\newblock Using additive expert ensembles to cope with concept drift.
\newblock In \emph{Proc.\ International Conf.\ on Machine Learning (ICML)},
  pages 449--456, 2005.

\bibitem[Kushner and Yin(2003)]{kushner2003stochastic}
H.~J. Kushner and G.~Yin.
\newblock \emph{Stochastic approximation and recursive algorithms and
  applications}.
\newblock Springer, 2003.

\bibitem[Leistner et~al.(2009)Leistner, Saffari, Roth, and
  Bischof]{Leistner2009}
C.~Leistner, A.~Saffari, P.~M Roth, and H.~Bischof.
\newblock On robustness of on-line boosting-a competitive study.
\newblock In \emph{ICCV Workshops}, pages 1362--1369, 2009.

\bibitem[Liu and Yu(2007)]{Liu2007}
X.~Liu and T.~Yu.
\newblock Gradient feature selection for online boosting.
\newblock In \emph{Proc.\ IEEE International Conf.\ on Computer Vision (ICCV)},
  pages 1--8, 2007.

\bibitem[MacKay(2003)]{mackay2003information}
David~JC MacKay.
\newblock \emph{Information theory, inference and learning algorithms}.
\newblock Cambridge university press, 2003.

\bibitem[Mason et~al.(1999)Mason, Baxter, Bartlett, and Frean]{Mason1999}
L.~Mason, J.~Baxter, P.~Bartlett, and M.~Frean.
\newblock Boosting algorithms as gradient descent in function space.
\newblock In \emph{NIPS}, 1999.

\bibitem[Minku(2011)]{Minku2011}
L.L. Minku.
\newblock \emph{Online ensemble learning in the presence of concept drift}.
\newblock PhD thesis, University of Birmingham, 2011.

\bibitem[Oza(2001)]{Oza2001}
N.~C. Oza.
\newblock \emph{Online ensemble learning}.
\newblock PhD thesis, University of California, Berkeley, 2001.

\bibitem[Oza and Russell(2001)]{OzaRussell2001}
N.~C. Oza and S.~Russell.
\newblock Online bagging and boosting.
\newblock In \emph{AISTATS}, pages 105--112, 2001.

\bibitem[Pasupathy and Kim(2011)]{pasupathy2011stochastic}
R.~Pasupathy and S.~Kim.
\newblock The stochastic root-finding problem: overview, solutions, and open
  questions.
\newblock \emph{ACM Trans.\ on Modeling and Computer Simulation}, 21\penalty0
  (3):\penalty0 19, 2011.

\bibitem[Pelossof et~al.(2009)Pelossof, Jones, Vovsha, and Rudin]{Pelossof2009}
R.~Pelossof, M.~Jones, I.~Vovsha, and C.~Rudin.
\newblock Online coordinate boosting.
\newblock In \emph{ICCV Workshops}, pages 1354--1361, 2009.

\bibitem[Polyak and Juditsky(1992)]{PolyakJuditsky1992}
Boris~T Polyak and Anatoli~B Juditsky.
\newblock Acceleration of stochastic approximation by averaging.
\newblock \emph{SIAM Journal on Control and Optimization}, 30\penalty0
  (4):\penalty0 838--855, 1992.

\bibitem[Rubinstein and Kroese(2004)]{RubinsteinKroese2004}
Reuven~Y Rubinstein and Dirk~P Kroese.
\newblock \emph{The cross-entropy method: a unified approach to combinatorial
  optimization, Monte-Carlo simulation and machine learning}.
\newblock Springer, 2004.

\bibitem[Saffari et~al.(2010)Saffari, Godec, Pock, Leistner, and
  Bischof]{Saffari2010}
A.~Saffari, M.~Godec, T.~Pock, C.~Leistner, and H.~Bischof.
\newblock Online multi-class lpboost.
\newblock In \emph{Proc.\ IEEE Conf.\ on Computer Vision and Pattern
  Recognition (CVPR)}, pages 3570--3577, 2010.

\bibitem[Schapire(2001)]{Schapire2001}
Robert~E Schapire.
\newblock Drifting games.
\newblock \emph{Machine Learning}, pages 265--291, 2001.

\bibitem[Schmidt et~al.(2013)Schmidt, Roux, and Bach]{Schmidt2013}
Mark Schmidt, Nicolas~Le Roux, and Francis Bach.
\newblock Minimizing finite sums with the stochastic average gradient.
\newblock \emph{arXiv preprint arXiv:1309.2388}, 2013.

\bibitem[Serfling(2009)]{serfling2009approximation}
R.~J. Serfling.
\newblock \emph{Approximation theorems of mathematical statistics}.
\newblock Wiley. com, 2009.

\bibitem[Telgarsky(2012)]{Telgarsky2012}
M.~Telgarsky.
\newblock A primal-dual convergence analysis of boosting.
\newblock \emph{Journal of Machine Learning Research}, pages 561--606, 2012.

\bibitem[Wang et~al.(2003)Wang, Fan, Yu, and Han]{WangFanYuHan2003}
H.~Wang, W.~Fan, P.S. Yu, and J.~Han.
\newblock Mining concept-drifting data streams using ensemble classifiers.
\newblock In \emph{Proc.\ ACM SIGKDD Conf.\ on Knowledge Discovery and Data
  Mining (KDD)}, pages 226--235, 2003.

\bibitem[Xu(2011)]{Xu2011}
Wei Xu.
\newblock Towards optimal one pass large scale learning with averaged
  stochastic gradient descent.
\newblock \emph{arXiv preprint arXiv:1107.2490}, 2011.

\bibitem[Zlochin et~al.(2004)Zlochin, Birattari, Meuleau, and
  Dorigo]{Zlochin2004}
Mark Zlochin, Mauro Birattari, Nicolas Meuleau, and Marco Dorigo.
\newblock Model-based search for combinatorial optimization: A critical survey.
\newblock \emph{Annals of Operations Research}, 131\penalty0 (1-4):\penalty0
  373--395, 2004.

\end{thebibliography}

\end{document}